\pdfoutput=1
\documentclass{article} 
\usepackage{iclr2027_conference,times}

\usepackage[utf8]{inputenc}
\usepackage[T1]{fontenc}
\usepackage{hyperref}
\usepackage{url}
\usepackage{booktabs}
\usepackage{amsfonts}
\usepackage{amsmath}
\usepackage{amssymb}
\usepackage{amsthm}
\usepackage{nicefrac}
\usepackage{microtype}
\usepackage{xcolor}
\usepackage{graphicx}
\usepackage{algorithm}
\usepackage{algpseudocode}
\usepackage{enumitem}
\usepackage[capitalize,noabbrev]{cleveref}

\theoremstyle{plain}
\newtheorem{theorem}{Theorem}
\newtheorem{lemma}{Lemma}
\newtheorem{corollary}{Corollary}
\newtheorem{proposition}{Proposition}

\theoremstyle{definition}
\newtheorem{definition}{Definition}
\newtheorem{remark}{Remark}

\crefname{theorem}{Theorem}{Theorems}
\crefname{lemma}{Lemma}{Lemmas}
\crefname{corollary}{Corollary}{Corollaries}
\crefname{proposition}{Proposition}{Propositions}
\crefname{definition}{Definition}{Definitions}
\crefname{remark}{Remark}{Remarks}
\crefname{algorithm}{Algorithm}{Algorithms}
\crefname{section}{Section}{Sections}
\crefname{table}{Table}{Tables}
\crefname{figure}{Figure}{Figures}

\DeclareMathOperator{\clip}{clip}
\DeclareMathOperator{\Ortho}{Ortho}
\DeclareMathOperator{\NS}{NS}
\DeclareMathOperator{\RMS}{RMS}
\DeclareMathOperator{\Var}{Var}
\DeclareMathOperator{\corr}{corr}
\DeclareMathOperator{\rank}{rank}

\newcommand{\R}{\mathbb{R}}
\newcommand{\E}{\mathbb{E}}

\title{OrScale: Orthogonalized Optimization with\\ Layer-Wise Trust-Ratio Scaling}

\author{Yuxuan Lou\\
National University of Singapore\\
\texttt{yuxuanlou@u.nus.edu}
\And
Yang You\\
National University of Singapore\\
\texttt{yangyou@nus.edu.sg}}

\iclrfinalcopy

\begin{document}

\maketitle

\begingroup
\renewcommand\thefootnote{}%
\footnotetext{Code, per-result configuration files, and raw training logs are open-sourced at \url{https://github.com/NUS-HPC-AI-Lab/OrScale}.}%
\endgroup

\begin{abstract}
Muon fixes the \emph{direction} of every matrix-valued update at the polar factor of its momentum, while each layer's step \emph{magnitude} is addressed only by a static shape correction. We derive a dynamic per-layer scalar by adapting the LARS/LAMB trust-ratio principle to the orthogonalized setting, where the standard denominator candidates---the raw momentum norm or the polar-factor norm---either live in the wrong unit space or carry no update-scale information. The resulting method, \emph{OrScale}, uses the norm of the parameter-space direction actually applied and anchors each layer's ratio at one via a per-layer calibration, so that the Moonlight recipe (tuned for AdamW, shared with Muon via RMS matching) transfers with \emph{no additional sweep}; a component ablation confirms each design choice is individually load-bearing. Theoretically, OrScale retains a nuclear-norm $O(1/\sqrt{T})$ convergence rate for any clipped multiplier and achieves a strict layer-adaptive descent gain $\kappa_{\mathrm{eff}}>1$ under two conditions estimable from standard training diagnostics---a bound that predicts the gain should \emph{grow with architectural heterogeneity}. Experiments confirm the prediction: with every hyperparameter inherited verbatim from the Moonlight recipe, OrScale matches or beats Muon+Moonlight across dense 125M--1.1B FineWeb-Edu pre-training, and on a 16B-A3B mixture-of-experts model---where the logged trust ratios separate cleanly by layer class---the gap widens by an order of magnitude to $0.130$ nats ($3.8\%$ relative) at parity wall-clock cost.
\end{abstract}

\section{Introduction}
\label{sec:intro}

Muon has rapidly moved from a compact optimizer idea to a practical ingredient of large-scale training. Its core step takes a matrix-valued momentum, applies a few Newton--Schulz iterations to approximate the polar factor, and subtracts this spectrally flattened direction from the weights; the update can be read as steepest descent under the spectral norm~\citep{bernstein2024modular}, and production systems---Moonlight~\citep{liu2025moonlight}, Kimi K2.5~\citep{kimi2026k25visual}, DeepSeek-V4~\citep{deepseekv42026}---and the modded-nanogpt records~\citep{jordan2024muon,jordan2024moddednanogpt} have made orthogonalized updates a fixture of current large-scale practice.

Orthogonalization settles the direction of each layer's update but leaves its \emph{magnitude} under-specified: after the polar factor, every matrix layer moves by the same global learning rate, even though attention projections, MLP blocks, routers, and experts see very different gradient statistics and curvature. Moonlight corrects the predictable part with a static per-shape factor $0.2\sqrt{\max(m,n)}$~\citep{liu2025moonlight}, but a static factor cannot track how each layer's update scale evolves \emph{during} training. For SGD and Adam, LARS and LAMB addressed precisely this problem by introducing a \emph{trust ratio}---the weight norm divided by the norm of the direction about to be subtracted---that scales each layer's step so that relative step sizes stay comparable across layers, enabling stable large-batch training~\citep{you2017lars,you2020lamb}. Can the same idea be applied to an orthogonalized optimizer, and if so, what should the ratio look like when the update direction has been spectrally flattened?

With the orthogonalized direction kept untouched, the remaining choice---one positive scalar per layer per step---can be derived rather than searched. Following the LARS/LAMB template, the quantity a layer-wise scalar should regulate is the \emph{relative} step $\|\Delta W_\ell\|_F/\|W_\ell\|_F$; holding it near a common target forces a trust ratio with numerator $\|W_\ell\|_F$ and denominator the norm of whatever the optimizer is about to subtract. After orthogonalization, however, that denominator stops being a bookkeeping choice: the direction actually applied is the shape-scaled polar factor together with weight decay, not the raw momentum, so the denominator should be the Frobenius norm of that parameter-space direction rather than a gradient-space surrogate. A ratio built this way would still, at initialization, silently shrink an already-tuned step by a layer- and width-dependent factor; a one-time per-layer calibration therefore anchors every ratio at exactly one, making the first update coincide with Muon+Moonlight's. And because a weight-norm numerator can feed its own growth, weight decay is coupled \emph{inside} the scaled update, giving the ratio a finite ceiling. Together these design choices define OrScale (\cref{sec:method}), with Muon and Muon+Moonlight recovered as the limit $r_{\min}{=}r_{\max}{=}1$; a component ablation shows each choice is individually load-bearing, each deletion producing a distinct, logged failure (\cref{sec:req-ablation}).

\begin{figure}[t]
\centering
\includegraphics[width=\linewidth]{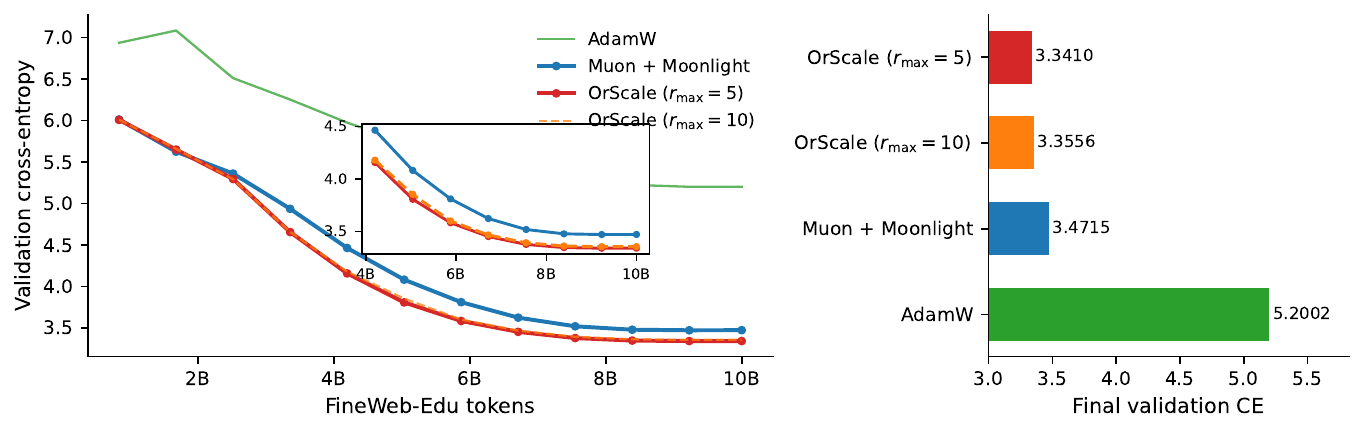}
\caption{\textbf{Moonlight-16B-A3B mixture-of-experts pre-training on 10B FineWeb-Edu tokens}, every hyperparameter inherited verbatim from the Moonlight recipe (LR $4.2\times10^{-4}$ shared by all optimizers; OrScale never swept; \cref{sec:protocol}). \emph{Left:} validation cross-entropy (inset: final 6B tokens). \emph{Right:} final values. OrScale improves on Muon+Moonlight by $0.130$ nats ($3.8\%$ relative)---an order of magnitude beyond the $+0.011$-nat dense gap at 1.1B---as the heterogeneity condition of \cref{sec:theory} predicts (tested in \cref{sec:moe-heterogeneity}).}
\label{fig:moe-headline}
\end{figure}

The calibration anchor doubles as our evaluation protocol: the learning rate and weight decay of the Moonlight recipe---tuned for AdamW, shared with Muon via RMS matching---carry over to OrScale unchanged, so \emph{every OrScale language-model number in this paper, 125M dense through a 16B-A3B mixture-of-experts model, runs at the untouched recipe learning rate; OrScale was never swept at any language-model scale}. The theory then says when the added machinery should pay: the gain $\kappa_{\mathrm{eff}}$ over Muon exceeds $1$ whenever the logged trust ratios are heterogeneous across layers and aligned with the layer-wise signal-to-smoothness ratio---two conditions estimable from training diagnostics, both expected to strengthen with architectural heterogeneity. That makes mixture-of-experts models a natural stress test, and the experiment agrees (\cref{fig:moe-headline}).

\textbf{Contributions.}
\begin{itemize}[leftmargin=1.2em,itemsep=2pt,topsep=2pt]
  \item \textbf{A trust ratio from first principles.} A forward derivation of Muon's layer-wise trust ratio from the LARS/LAMB template, adapted to the orthogonalized setting; a parameter-light algorithm whose limits recover Muon and Muon+Moonlight; and a component ablation in which each design-choice deletion fails in its own recognizable way (\cref{sec:method}).
  \item \textbf{Zero-sweep transfer as a protocol.} The calibration anchor lets a Moonlight/AdamW-tuned recipe transfer with no additional sweep---an ability we then lean on at every language-model scale we train (\cref{sec:protocol}).
  \item \textbf{Theory conditioned on measurable statistics.} A nuclear-norm $O(1/\sqrt T)$ rate for any clipped multiplier, a strict gain $\kappa_{\mathrm{eff}}>1$ under two conditions checkable from logged trust ratios and standard per-layer diagnostics rather than oracle smoothness constants alone, and a collapse theorem for unit-inconsistent ratios (\cref{sec:theory}).
  \item \textbf{Validation from 6.5M to 16B parameters.} OrScale ranks first on CIFAR-10/DavidNet, matches or beats Muon+Moonlight over dense 125M--1.1B FineWeb-Edu pre-training, and widens the gap to $+0.130$ nats on 16B-A3B MoE at wall-clock parity (\cref{sec:experiments}).
\end{itemize}

\section{Background and positioning}
\label{sec:related}

\textbf{Muon and orthogonalized updates.}
Muon~\citep{jordan2024muon} updates each matrix-valued parameter $W_\ell\in\R^{m_\ell\times n_\ell}$ by $W_{\ell,t+1}=W_{\ell,t}-\eta_t Q_{\ell,t}$ (optional weight decay aside), where $Q_{\ell,t}=\NS_k(\widetilde M_{\ell,t})$ is a $k$-step Newton--Schulz approximation to the polar factor $\Ortho(\widetilde M_{\ell,t})$ of the Nesterov-lookahead gradient momentum. If $Q_A=\Ortho(A)$ then $\langle A,Q_A\rangle=\|A\|_*$ and $\|Q_A\|_F=\sqrt{\rank A}$, making orthogonalization a nuclear-norm/spectral primitive rather than an entrywise adaptive update~\citep{bernstein2024modular}. Moonlight adds the static shape factor $0.2\sqrt{\max(m,n)}$, equalizing the per-element RMS of $Q$ across shapes and pinning it to AdamW's typical update RMS so that AdamW-tuned learning rates carry over~\citep{liu2025moonlight}; MuonClip~\citep{kimi2025muonclip} and AdaMuon~\citep{adamuon2025} extend Muon along other axes. OrScale keeps the Muon direction untouched and addresses the complementary question of the per-layer scalar magnitude over training time.

\textbf{Trust-ratio optimizers.}
LARS~\citep{you2017lars} scales each layer's SGD step by $\|W_\ell\|_F/\|G_\ell+\lambda W_\ell\|_F$---weight norm over the norm of the exact direction SGD is about to subtract---holding every layer's \emph{relative} step near a common target; LAMB~\citep{you2020lamb} applies the same template to the Adam-normalized direction and enabled large-batch BERT training~\citep{devlin2018bert,goyal2017accuratelarge}. The template silently relies on a unit convention---numerator and denominator are both norms of parameter-space objects---that Muon's pipeline breaks at both ends: the post-orthogonalization norm $\|Q_\ell\|_F$ is a shape constant, and the momentum norm $\|\widetilde M_\ell\|_F$ has gradient units. \Cref{sec:derivation} turns this unit analysis into the denominator that restores the convention (\cref{def:principle}); \cref{sec:req-ablation} shows what happens when it is dropped.

\textbf{Static corrections and width transfer.}
Moonlight's shape factor and $\mu$P-style parameterization~\citep{yang2022tensorprogramsvtuning,yang2023tensorprogramsvifeature} handle predictable shape and width effects at initialization; a trust ratio handles training-time drift, and OrScale's LM configuration composes the two---reproducing the Moonlight step exactly at $t{=}0$, adapting thereafter. Preconditioner methods such as Sophia~\citep{liu2023sophia} and SOAP~\citep{vyas2025soap} modify the direction and are orthogonal to the magnitude question here.

\textbf{Positioning within Muon convergence theory.}
A rapidly growing line of work analyzes Muon and its relatives: smoothness-based rates for the exact and inexact polar step~\citep{li2025muonnote}, non-Euclidean trust-region and LMO views~\citep{kovalev2025orthogonalization,pethick2025scion}, and layer-wise $(L^0,L^1)$ smoothness frameworks with per-layer stepsizes such as Gluon~\citep{riabinin2025gluon}. These analyses obtain sharper constants than ours, but condition their layer-adaptive conclusions on \emph{oracle} smoothness constants unobservable during training. Ours trades sharpness for verifiability: the gain condition of \cref{thm:gain} involves the \emph{logged, clipped trust ratios} $\hat r_{\ell,t}$---quantities OrScale computes anyway---and per-layer statistics estimable from training diagnostics, so whether the gain regime holds is checkable on any run; we check it at 16B in \cref{sec:moe-heterogeneity}, and \cref{app:compare} tabulates the guarantee structures.

\section{Deriving the OrScale trust ratio}
\label{sec:method}

\begin{figure}[t]
\centering
\includegraphics[width=\linewidth]{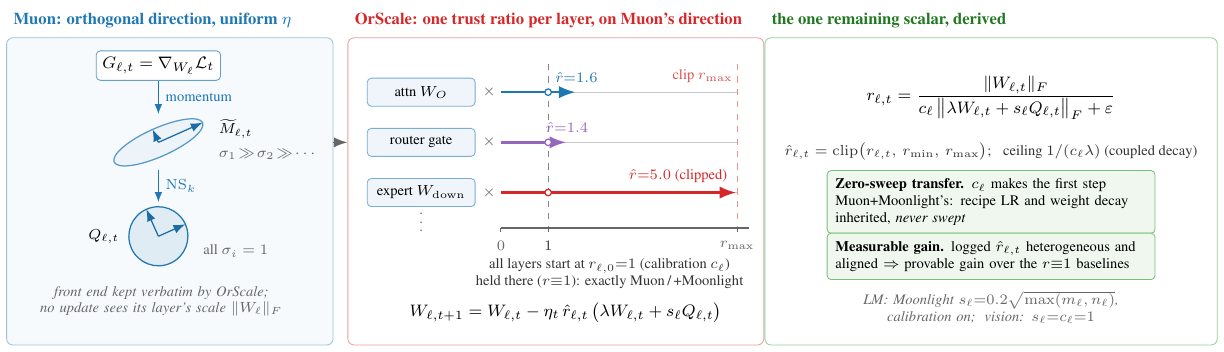}
\caption{\textbf{OrScale at a glance.} \emph{Left:} Muon fixes each update's direction at the polar factor of its momentum, leaving the per-layer magnitude to one layer-blind global $\eta$. \emph{Center:} OrScale supplies that magnitude---one clipped trust ratio $\hat r_{\ell,t}$ per matrix layer, on the unchanged direction. Arrow lengths are ratios logged in the 16B MoE run at 1.7B tokens, already spread $1.4$--$5.0$ across layer classes with the expert projection at the clip ceiling (\cref{sec:moe}; per-class colors and full trajectories in \cref{fig:moe-trust}); the calibration $c_\ell$ starts every layer at the $r{=}1$ guide (\cref{def:calib}), where $r_{\min}{=}r_{\max}{=}1$ recovers Muon and Muon+Moonlight exactly. \emph{Right:} the ratio---relative-step numerator over applied-direction denominator (\cref{def:principle}), bounded through the coupled decay (\cref{lem:ceiling})---with its gain condition measurable from the logged ratios (\cref{sec:theory}).}
\label{fig:pipeline}
\end{figure}

\textbf{Notation.} $W_\ell\in\R^{m_\ell\times n_\ell}$ is a matrix parameter, $p_\ell=\min(m_\ell,n_\ell)$, $\|\cdot\|_F$ and $\|\cdot\|_*$ are the Frobenius and nuclear norms, $\Ortho(\cdot)$ the polar factor, $\NS_k$ its $k$-step Newton--Schulz approximation, $\widetilde M_{\ell,t}$ the Nesterov-lookahead momentum, $\clip(\cdot,r_{\min},r_{\max})$ clamps to an interval, and $\varepsilon$ is a small floor. Every Muon-family optimizer in this paper shares the front end
\begin{equation}
M_{\ell,t} = \mu\,M_{\ell,t-1}+G_{\ell,t},\qquad
\widetilde M_{\ell,t} = \mu\,M_{\ell,t}+G_{\ell,t},\qquad
Q_{\ell,t} = \NS_k(\widetilde M_{\ell,t}),
\label{eq:frontend}
\end{equation}
the choice used by every reported large-scale Muon system~\citep{liu2025moonlight,jordan2024muon}; non-matrix parameters (biases, norm gains, embeddings when desired) use an AdamW fallback~\citep{loshchilov2019adamw}. \Cref{fig:pipeline} previews the derivation and the update it produces.

\subsection{The derivation}
\label{sec:derivation}

Muon's value lies in its direction: the polar factor equalizes singular values and yields the descent identity $\langle A,Q_A\rangle=\|A\|_*$ on which its analysis rests. We therefore keep the direction and adapt magnitude only: once the direction of the applied step---call it $D_{\ell,t}$---is fixed, the most general layer-wise modification is $W_{\ell,t+1}=W_{\ell,t}-\eta_t\,\rho_{\ell,t}\,D_{\ell,t}$, one positive scalar $\rho_{\ell,t}$ per layer per step. This scalar should regulate the relative step $\|\Delta W_\ell\|_F/\|W_\ell\|_F$, whose drift across layers is precisely what LARS observed to make layers stall or destabilize~\citep{you2017lars}; holding $\|\eta\rho_\ell D_\ell\|_F/\|W_\ell\|_F$ near a common target gives $\rho_\ell\propto\|W_\ell\|_F/\|D_\ell\|_F$---a trust ratio with numerator $\|W_\ell\|_F$ and denominator the norm of the applied direction. The choice of denominator is where Muon parts ways with SGD and Adam: the ratio should measure what is actually subtracted, and for a Muon step with shape factor $s_\ell$ and coupled weight decay the direction is $D_{\ell,t}=\lambda W_{\ell,t}+s_\ell Q_{\ell,t}$.

\begin{definition}[Real-update-direction trust ratio]
\label{def:principle}
Let $D_{\ell,t}$ be the parameter-space direction the optimizer is about to subtract from $W_{\ell,t}$ (before the global learning rate and the trust-ratio scalar). The trust ratio is
$r_{\ell,t} = \|W_{\ell,t}\|_F/\big(\|D_{\ell,t}\|_F+\varepsilon\big)$.
\end{definition}

For SGD, $D=G+\lambda W$ and \cref{def:principle} \emph{is} LARS. After orthogonalization, however, the choice of denominator acquires content, and the reason is units: the pre-orthogonalization momentum $\widetilde M_\ell$ lives in gradient units---its norm set by the loss scale and the noise level---whereas the direction Muon actually subtracts lives in parameter units, so dividing the weight norm by the momentum norm measures no relative step at all, but a dimensionally mixed quantity that can sit orders of magnitude from one. The post-orthogonalization norm fails in the opposite way: $\|Q_\ell\|_F=\sqrt{\rank Q_\ell}$ is a shape constant carrying no update-scale information whatsoever (\cref{lem:polar}). \Cref{def:principle} restores the parameter-space convention on both sides of the ratio.

Two further considerations make the ratio deployable. First, adopting it must not invalidate an already-tuned learning rate---the Moonlight line's main practical asset is that Muon's update RMS is matched to AdamW's, so one recipe serves both~\citep{liu2025moonlight}---yet the raw ratio of \cref{def:principle} would break exactly that: at initialization $\|W_{\ell,0}\|_F/\|s_\ell Q_{\ell,0}\|_F\approx\RMS(W_{\ell,0})/(0.2)\propto 1/\sqrt{\mathrm{fan\mbox{-}in}}$ sits between $0.11$ and $0.26$ across the layers of our 125M model, shrinking the effective step several-fold, by a different factor for every layer and width. The fix is a one-time anchor:

\begin{definition}[Lazy per-layer calibration]
\label{def:calib}
At the first step at which $W_\ell$ receives a non-zero update---write $0$ for that step's index in what follows---store the fp32 scalar
$c_{\ell} = \|W_{\ell,0}\|_F/\big(\|\lambda W_{\ell,0}+s_\ell Q_{\ell,0}\|_F+\varepsilon\big)$,
and use the calibrated ratio $r_{\ell,t} = \|W_{\ell,t}\|_F/\big(c_{\ell}\,\|\lambda W_{\ell,t}+s_\ell Q_{\ell,t}\|_F+\varepsilon\big)$ thereafter.
\end{definition}

The constant records each layer's ratio at initialization, so that the trust ratio thereafter measures \emph{growth relative to initialization} rather than absolute scale. By construction $r_{\ell,0}=1$ (to within the $\varepsilon$ floor) for every layer, shape, width, and initialization: the first parameter step coincides with Muon+Moonlight's, the recipe learning rate and weight decay transfer with no additional sweep, and the ratio is width-invariant at initialization in the Moonlight+$\mu$P sense (\cref{cor:cal}). Afterwards it moves as $r_{\ell,t}\approx\|W_{\ell,t}\|_F/\|W_{\ell,0}\|_F$ in early training (\cref{lem:lars})---the LARS-style weight-norm signal, now referenced to each layer's own initialization.

Second, the adaptation should be bounded. A weight-norm numerator creates a potential feedback loop---growing weights raise $r_\ell$, amplifying the step, growing the weights further---and decoupled weight decay~\citep{loshchilov2019adamw} does not close it, because its pull-back $(1-\eta\lambda)W$ does not scale with $r_\ell$. Coupling the decay into the scaled update---putting $\lambda W_\ell$ \emph{inside} $D_{\ell,t}$, as \cref{def:principle} already requires---does: as $\|W_\ell\|_F$ grows the denominator becomes $\approx c_\ell\lambda\|W_\ell\|_F$ and the ratio approaches the finite ceiling $1/(c_\ell\lambda)$ (\cref{lem:ceiling}), while a conventional clip $\hat r_{\ell,t}=\clip(r_{\ell,t},r_{\min},r_{\max})$ guards the transient before the ceiling engages. \Cref{sec:req-ablation} deletes each design choice in turn and reports what breaks.

\subsection{The resulting algorithm}
\label{sec:algorithm}

\begin{algorithm}[t]
\caption{The OrScale update for matrix-valued parameters. The shape factor $s_\ell$ and the one-time calibration are configuration choices, not separate methods: language-model training uses Moonlight's $s_\ell$ with calibration on; the vision benchmark uses $s_\ell=1$ with calibration off.}
\label{alg:orscale}
\small
\begin{algorithmic}[1]
\Require LR $\eta_t$, momentum $\mu\in[0,1)$, weight decay $\lambda\ge 0$, NS iterations $k$ (default $5$), $\varepsilon=10^{-6}$; shape factor $s_\ell$ ($0.2\sqrt{\max(m_\ell,n_\ell)}$ for LM training, else $1$); calibration on/off (on for LM training); clip bounds $r_{\min},r_{\max}$ (defaults $0.1,5.0$ for LM training; $0.5,1.5$ otherwise).
\For{each matrix parameter $W_\ell\in\R^{m_\ell\times n_\ell}$}
  \State $G_{\ell,t}\gets \nabla_{W_\ell}\mathcal L_t$;\quad $M_{\ell,t}\gets \mu M_{\ell,t-1}+G_{\ell,t}$;\quad $\widetilde M_{\ell,t}\gets \mu M_{\ell,t}+G_{\ell,t}$
  \State $Q_{\ell,t}\gets \NS_k(\widetilde M_{\ell,t})$ \Comment{approximate polar factor}
  \If{calibration on \textbf{and} $c_{\ell}$ uninitialized}
    \State $c_{\ell}\gets \|W_{\ell,t}\|_F/(\|\lambda W_{\ell,t}+s_\ell Q_{\ell,t}\|_F+\varepsilon)$ \Comment{calibration anchor (\cref{def:calib})}
  \ElsIf{calibration off}
    \State $c_{\ell}\gets 1$
  \EndIf
  \State $D_{\ell,t}\gets \lambda W_{\ell,t}+s_\ell Q_{\ell,t}$ \Comment{applied direction, WD coupled}
  \State $r_{\ell,t}\gets \|W_{\ell,t}\|_F/(c_{\ell}\,\|D_{\ell,t}\|_F+\varepsilon)$ \Comment{fp32 cast around the division}
  \State $\hat r_{\ell,t}\gets\clip(r_{\ell,t},r_{\min},r_{\max})$;\qquad $W_{\ell,t+1}\gets W_{\ell,t}-\eta_t\,\hat r_{\ell,t}\,D_{\ell,t}$
\EndFor
\State Update non-matrix parameters with the AdamW fallback.
\end{algorithmic}
\end{algorithm}

The update, written out with the calibration $c_\ell$ of \cref{def:calib}, is
\begin{equation}
r_{\ell,t}=\frac{\|W_{\ell,t}\|_F}{c_\ell\,\|\lambda W_{\ell,t}+s_\ell Q_{\ell,t}\|_F+\varepsilon},\qquad
W_{\ell,t+1}=W_{\ell,t}-\eta_t\,\hat r_{\ell,t}\,(\lambda W_{\ell,t}+s_\ell Q_{\ell,t}).
\label{eq:orscale-lm}
\end{equation}
For language-model training we instantiate OrScale with Moonlight's shape factor $s_\ell=0.2\sqrt{\max(m_\ell,n_\ell)}$ and enable the calibration, following Moonlight's RMS-matching convention so that the AdamW-tuned recipe---learning rate and weight decay alike---carries over unchanged; the weight-decay coupling inside $D_{\ell,t}$ is part of the core update in both configurations, not an LM-specific choice. On the CIFAR-10 benchmark, where the standard protocol sweeps every optimizer family over its own learning-rate grid, there is no tuned recipe to preserve, so we run \cref{eq:orscale-lm} with $s_\ell=1$ and $c_\ell=1$: the LARS template with the substitution $G\to Q$.

\textbf{Cost, state, and degenerate cases.}
With calibration on, OrScale adds one fp32 scalar per matrix layer (none otherwise) and two Frobenius norms per step, both negligible next to Newton--Schulz; measured wall-clock at 16B is at parity with Muon+Moonlight (\cref{tab:moe}). \Cref{alg:orscale} contains the baselines as limits: $r_{\min}=r_{\max}=1$ with $s_\ell=1$ is Muon; $r_{\min}=r_{\max}=1$ with the Moonlight $s_\ell$ is Muon+Moonlight; $\lambda=0$ removes weight decay while keeping the scaling.

\section{Convergence theory}
\label{sec:theory}

Full proofs are in \cref{app:proofs}. We minimize $f(W)=\E_s[\ell(W,s)]$ over $h$ matrix-valued blocks $W_\ell\in\R^{m_\ell\times n_\ell}$; write $p_\ell=\min(m_\ell,n_\ell)$ and $P=\sum_\ell p_\ell$. Assumptions: \textbf{(A1)}~layerwise smoothness with constants $L_\ell$, $\bar L_p{=}(\sum_\ell L_\ell p_\ell)/P$; \textbf{(A2)}~bounded variance, $\E\|G_{\ell,t}-\nabla_\ell f\|_F^2\le\sigma_\ell^2/b$; \textbf{(A3)}~bounded nuclear noise, $\E\|\xi_{\ell,t}\|_*\le\sqrt{p_\ell}\,\sigma_\ell/\sqrt b$ (implied by~(A2)); \textbf{(A4)}~the headline statements set $\mu=0$, and \cref{thm:basic} additionally $\lambda=0$ ($\lambda>0$ restored in \cref{prop:wd}; the $\mu>0$ extension follows LAMB~\citep{you2020lamb}). Convergence criterion: $\Psi(W)=\tfrac{1}{\sqrt P}\sum_\ell\|\nabla_\ell f(W)\|_*$.

Two properties of polar factors underpin the analysis (\cref{lem:polar}): the inner-product identity $\langle A,Q_A\rangle=\|A\|_*$ with its dual bound $|\langle B,Q_A\rangle|\le\|B\|_*$, and the norm identity $\|Q_A\|_F=\sqrt{\rank A}$. Together these yield an expected-descent bound: $\E\langle\nabla_\ell f,Q_{\ell,t}\rangle\ge\|\nabla_\ell f\|_*-\sqrt{p_\ell}\,\sigma_\ell/\sqrt b$ (\cref{lem:descent}).

\begin{theorem}[Rate for any clipped multiplier]
\label{thm:basic}
Under (A1)--(A4), for any update $W_{\ell,t+1}=W_{\ell,t}-\eta\hat r_{\ell,t}Q_{\ell,t}$ with $\hat r_{\ell,t}\in[r_{\min},r_{\max}]$ arbitrary (possibly sample-dependent), the stepsize $\eta=\sqrt{2(f_0-f^*)/(r_{\max}^2T\sum_\ell L_\ell p_\ell)}$ and $W_a$ uniform on $\{W_1,\dots,W_T\}$ give
\[
\big(\E\,\Psi(W_a)\big)^2 \;\le\; \frac{4r_{\max}^2\,(f_0-f^*)\,\bar L_p}{r_{\min}^2\,T}\;+\;\frac{2r_{\max}^2}{r_{\min}^2}\cdot\frac{\big(\sum_\ell\sqrt{p_\ell}\,\sigma_\ell\big)^2}{P\,b}.
\]
\end{theorem}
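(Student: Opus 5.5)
The plan is the standard nonconvex smoothness argument, run layer by layer. I will make one modification so that the multiplier $\hat r_{\ell,t}$ may depend on the same sample as $Q_{\ell,t}$. That dependence means I cannot multiply \cref{lem:descent} by $\hat r_{\ell,t}$ after taking expectations. Instead I will prove a \emph{pointwise} lower bound on the descent term, push $\hat r_{\ell,t}$ through it using only $\hat r_{\ell,t}\in[r_{\min},r_{\max}]$, and take expectations afterwards.

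\textbf{Step 1 (one-step bound).} By (A1), with $\lambda=\mu=0$ and $\|Q_{\ell,t}\|_F^2=\rank\le p_\ell$ from \cref{lem:polar},
\[
f(W_{t+1})\le f(W_t)-\eta\sum_\ell \hat r_{\ell,t}\langle\nabla_\ell f(W_t),Q_{\ell,t}\rangle+\tfrac{\eta^2 r_{\max}^2}{2}\sum_\ell L_\ell p_\ell .
\]

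\textbf{Step 2 (pointwise descent with sample-dependent $\hat r$).} Write $G=\nabla_\ell f+\xi$. The identity and dual bound of \cref{lem:polar} give $\langle\nabla_\ell f,Q\rangle=\|G\|_*-\langle\xi,Q\rangle\ge\|\nabla_\ell f\|_*-2\|\xi\|_*$, and this holds for every sample. If the right-hand side is nonnegative, multiplying by $\hat r\ge r_{\min}$ and then using $\hat r\le r_{\max}$ on the noise part gives
\[
\hat r\langle\nabla_\ell f,Q\rangle\ge r_{\min}\|\nabla_\ell f\|_*-2r_{\max}\|\xi\|_* .
\]
If it is negative, I still have $\hat r\langle\nabla_\ell f,Q\rangle\ge \hat r(\|\nabla_\ell f\|_*-2\|\xi\|_*)$, and bounding each sign separately yields the same inequality. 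Taking expectations and applying (A3) gives
\[
\E\,\hat r_{\ell,t}\langle\nabla_\ell f,Q_{\ell,t}\rangle\ge r_{\min}\E\|\nabla_\ell f\|_*-2r_{\max}\sqrt{p_\ell}\,\sigma_\ell/\sqrt b .
\]

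\textbf{Step 3 (telescope and tune).} Summing over $t=0,\dots,T-1$ and telescoping with $f\ge f^*$, then dividing by $\eta r_{\min}T\sqrt P$, gives
\[
\E\Psi(W_a)\le \frac{f_0-f^*}{\eta r_{\min}T\sqrt P}+\frac{\eta r_{\max}^2\sum_\ell L_\ell p_\ell}{2r_{\min}\sqrt P}+\frac{c\,r_{\max}}{r_{\min}}\cdot\frac{\sum_\ell\sqrt{p_\ell}\sigma_\ell}{\sqrt{Pb}},
\]
with $c=2$ from Step 2. The stated $\eta$ balances the first two terms, so together they equal $\tfrac{r_{\max}}{r_{\min}}\sqrt{2(f_0-f^*)\bar L_p/T}$. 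Squaring with $(a+b)^2\le 2a^2+2b^2$ then yields the $\tfrac{4r_{\max}^2(f_0-f^*)\bar L_p}{r_{\min}^2T}$ term exactly.

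\textbf{Main obstacle.} The delicate step is Step 2: the coupling between $\hat r_{\ell,t}$ and the noise. With $c=2$, the squaring produces a noise coefficient $8r_{\max}^2/r_{\min}^2$, whereas the statement has $2r_{\max}^2/r_{\min}^2$; matching it requires $c=1$. I would first try to get $c=1$ by bounding $\hat r\langle\nabla_\ell f,Q\rangle$ through $\hat r\|G\|_*-\hat r\langle\xi,Q\rangle$, using $\|G\|_*\ge\|\nabla_\ell f\|_*-\|\xi\|_*$ only inside the $r_{\min}$ factor. I would check whether the noise can be charged once rather than twice, for instance via $\E\|\xi\|_*$ appearing once through \cref{lem:descent} applied to the conditional mean. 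If that fails, the proof goes through with a constant-factor-larger variance term, and the $O(1/\sqrt T)$ rate is unaffected.
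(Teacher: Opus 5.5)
Your proposal has a genuine gap. It proves a weaker inequality than the one stated, and the repair you sketch first does not close it. The factor is lost in Step 2 at the pointwise reverse triangle inequality $\|G_{\ell,t}\|_*\ge\|\nabla_\ell f\|_*-\|\xi_{\ell,t}\|_*$, which charges the noise a second time. With $c=2$ your squared noise term is $8r_{\max}^2/r_{\min}^2\cdot(\sum_\ell\sqrt{p_\ell}\sigma_\ell)^2/(Pb)$, four times the claimed one. Your first repair keeps the triangle inequality but applies it only inside the $r_{\min}$ factor. That gives $\hat r x\ge r_{\min}\|\nabla_\ell f\|_*-(r_{\min}+r_{\max})\|\xi\|_*$, hence a squared constant $2(r_{\min}+r_{\max})^2/r_{\min}^2$, which is still strictly larger than $2r_{\max}^2/r_{\min}^2$.

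The missing idea is to never lower-bound $\|G\|_*$ pointwise, and to use convexity in expectation instead. Because $\|G_{\ell,t}\|_*\ge0$ and $\hat r_{\ell,t}\in[r_{\min},r_{\max}]$, \cref{lem:polar} gives, for every sample and however $\hat r_{\ell,t}$ depends on it,
\[
\hat r_{\ell,t}\langle\nabla_\ell f(W_t),Q_{\ell,t}\rangle=\hat r_{\ell,t}\|G_{\ell,t}\|_*-\hat r_{\ell,t}\langle\xi_{\ell,t},Q_{\ell,t}\rangle\;\ge\;r_{\min}\|G_{\ell,t}\|_*-r_{\max}\|\xi_{\ell,t}\|_* .
\]
Now condition on $W_t$. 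Unbiasedness, $\E[G_{\ell,t}\mid W_t]=\nabla_\ell f(W_t)$, is implicit in (A2) and already used in \cref{lem:descent}. With Jensen it gives $\E[\|G_{\ell,t}\|_*\mid W_t]\ge\|\nabla_\ell f(W_t)\|_*$, and (A3) bounds the noise term. Hence $\E[\hat r_{\ell,t}\langle\nabla_\ell f(W_t),Q_{\ell,t}\rangle]\ge r_{\min}\E\|\nabla_\ell f(W_t)\|_*-r_{\max}\sqrt{p_\ell}\sigma_\ell/\sqrt b$. That is exactly $c=1$, and your Step 3 then yields the stated constants.

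The paper reaches the same per-layer inequality by a different split: $\hat r x\ge r_{\min}x-(r_{\max}-r_{\min})x^-$, with $x^-\le\|\xi\|_*$ and \cref{lem:descent} used for $\E x$. Its noise coefficients $r_{\min}+(r_{\max}-r_{\min})$ likewise sum to $r_{\max}$. Your pointwise triangle-inequality route has the small merit of not needing unbiasedness, but that is precisely what costs the factor of four.
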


\Cref{thm:basic} treats $\hat r$ as a black box: it guarantees convergence for any clipped multiplier but cannot distinguish OrScale from Muon. The comparison requires a finer quantity. Let $a_\ell=\|\nabla_\ell f\|_*$ and $b_\ell=L_\ell p_\ell$, and define the \emph{descent functional} $\Phi(\hat r)=\big(\sum_\ell\hat r_\ell a_\ell\big)^2/\sum_\ell b_\ell\hat r_\ell^2$ for per-layer multipliers $\hat r$; Muon corresponds to $\hat r\equiv 1$. The \emph{effective gain} is $\kappa_{\mathrm{eff}}(\hat r)=\Phi(\hat r)/\Phi(\mathbf 1)$.

\begin{theorem}[Layer-adaptive headroom]
\label{thm:opt}
$\max_{\hat r}\Phi(\hat r)=\sum_\ell a_\ell^2/b_\ell$ at $\hat r_\ell^*\propto a_\ell/b_\ell$, and $\kappa_{\mathrm{layer}}:=\Phi(\hat r^*)/\Phi(\mathbf 1)\ge 1$ with equality iff $a_\ell/b_\ell$ is constant across layers. Geometrically, with $x_\ell=a_\ell/\sqrt{b_\ell}$ and $y_\ell=\hat r_\ell\sqrt{b_\ell}$: $\kappa_{\mathrm{eff}}(\hat r)=\cos^2\theta_{xy}/\cos^2\theta_{x,\sqrt b}$ (\cref{lem:geom})---an optimizer beats Muon iff its weighted multipliers align better with the optimum than uniform scaling does.
\end{theorem}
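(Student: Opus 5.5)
The maximization is a weighted Cauchy--Schwarz argument. Assume $a_\ell\ge 0$ and $b_\ell>0$, and treat $\Phi$ as defined on positive multipliers; it is scale-invariant in $\hat r$.

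\textbf{Step 1: the maximum and the maximizer.} Substitute $x_\ell=a_\ell/\sqrt{b_\ell}$ and $y_\ell=\hat r_\ell\sqrt{b_\ell}$. Then $\sum_\ell \hat r_\ell a_\ell=\langle x,y\rangle$ and $\sum_\ell b_\ell\hat r_\ell^2=\|y\|^2$, so
\[
\Phi(\hat r)=\frac{\langle x,y\rangle^2}{\|y\|^2}=\|x\|^2\cos^2\theta_{xy}.
\]
Cauchy--Schwarz gives $\Phi(\hat r)\le\|x\|^2=\sum_\ell a_\ell^2/b_\ell$. Equality holds iff $y\parallel x$, that is $\hat r_\ell\sqrt{b_\ell}\propto a_\ell/\sqrt{b_\ell}$, or $\hat r^*_\ell\propto a_\ell/b_\ell$. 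If some $a_\ell=0$, the optimal ratio for that layer is zero. The supremum is then approached rather than attained among strictly positive multipliers, or attained once zero is allowed; I will state this caveat briefly.

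\textbf{Step 2: the bound $\kappa_{\mathrm{layer}}\ge1$ and its equality case.} Muon corresponds to $\hat r\equiv\mathbf 1$, which gives $y=\sqrt b:=(\sqrt{b_\ell})_\ell$ and $\Phi(\mathbf 1)=(\sum_\ell a_\ell)^2/\sum_\ell b_\ell$. Therefore
\[
\kappa_{\mathrm{layer}}=\frac{\sum_\ell a_\ell^2/b_\ell\cdot\sum_\ell b_\ell}{(\sum_\ell a_\ell)^2}\ge1,
\]
which is again Cauchy--Schwarz, applied to the vectors $(a_\ell/\sqrt{b_\ell})_\ell$ and $(\sqrt{b_\ell})_\ell$. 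Equality holds iff these two vectors are parallel, i.e. iff $a_\ell/b_\ell$ is constant across layers. This is the same statement as saying that $\mathbf 1$ is itself a maximizer.

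\textbf{Step 3: the geometric form.} Using the identity from Step 1 for both $y$ and $\sqrt b$, the factor $\|x\|^2$ cancels:
\[
\kappa_{\mathrm{eff}}(\hat r)=\frac{\|x\|^2\cos^2\theta_{xy}}{\|x\|^2\cos^2\theta_{x,\sqrt b}}=\frac{\cos^2\theta_{xy}}{\cos^2\theta_{x,\sqrt b}}.
\]
This is the content of \cref{lem:geom}. The ``iff'' interpretation follows immediately: $\kappa_{\mathrm{eff}}>1$ exactly when $y$ makes a smaller angle with the optimum direction $x$ than $\sqrt b$ does. Since all entries are nonnegative, the cosines are nonnegative, so comparing squared cosines is the same as comparing angles.

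\textbf{Main obstacle.} None of the steps is deep. The only care needed is with degenerate cases: $x=0$, where all gradients vanish and the ratio is undefined, and zero entries of $a$. I will exclude $x=0$ by hypothesis and note that the maximizer may then lie on the boundary of the positive orthant.
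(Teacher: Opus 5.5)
Your proposal is correct and matches the paper's proof step for step: you substitute $x_\ell=a_\ell/\sqrt{b_\ell}$, $y_\ell=\hat r_\ell\sqrt{b_\ell}$ to write $\Phi=\langle x,y\rangle^2/\|y\|^2$, apply Cauchy--Schwarz for the maximum and maximizer, apply it again to $x$ and $\sqrt b$ for $\kappa_{\mathrm{layer}}\ge 1$ and its equality case, and cancel $\|x\|^2$ to get the geometric form. Your remarks on the degenerate cases (zero entries of $a$, $x=0$) and on the nonnegativity of the cosines are small refinements that the paper leaves implicit.
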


A two-layer example makes the scaling concrete: with $L_2=\beta L_1$ and $a_2=\alpha a_1$, $\kappa_{\mathrm{layer}}=(1+\beta)(1+\alpha^2/\beta)/(1+\alpha)^2$ evaluates to $1.67$, $3.03$, $9.10$ at $(\alpha,\beta)\in\{(0.1,1),(1,10),(0.1,10)\}$---\emph{the headroom grows with cross-layer heterogeneity}. Whether OrScale captures any of this headroom depends on two statistics of its logged multipliers: \textbf{(S1)}~heterogeneity, $\nu_{\hat r}=\Var_\ell[\hat r_\ell\sqrt{b_\ell}]/\E_\ell[\hat r_\ell\sqrt{b_\ell}]^2>0$; and \textbf{(S2)}~alignment, $\rho=\corr_\ell(\hat r_\ell\sqrt{b_\ell},\,x_\ell)>0$.

\begin{theorem}[OrScale-specific gain]
\label{thm:gain}
Under (A1)--(A4), (S1), (S2),
$\;\kappa_{\mathrm{eff}}^{\mathrm{OrScale}} \ge 1+\rho^2\nu_{\hat r}\,\dfrac{1+\nu_{\sqrt b}}{1+\nu_{\sqrt b}/\rho^2}-\delta(\lambda)$,
where $\nu_{\sqrt b}=\Var_\ell[\sqrt{b_\ell}]/\E_\ell[\sqrt{b_\ell}]^2$ and $\delta(\lambda)=O(\lambda^2\|W\|_F^2/\|Q\|_F^2)$.
\end{theorem}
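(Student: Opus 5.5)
The plan is to reduce everything to the geometric form of \cref{thm:opt}. By \cref{lem:geom}, for $\lambda=0$ we have $\kappa_{\mathrm{eff}}(\hat r)=\cos^2\theta_{xy}/\cos^2\theta_{x,\sqrt b}$, with $x_\ell=a_\ell/\sqrt{b_\ell}$ and $y_\ell=\hat r_\ell\sqrt{b_\ell}$. I would first prove the bound for the pure polar update, where $\delta(\lambda)=0$. Only then would I treat the weight-decay term as a perturbation.

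\textbf{Step 1: cosines in layer statistics.} Treat layers as a uniform empirical distribution over the $h$ blocks. For any vectors $u,v$,
\[
\langle u,v\rangle=h\big(\E u\,\E v+\corr(u,v)\,\mathrm{sd}(u)\,\mathrm{sd}(v)\big),\qquad \|v\|^2=h\,(\E v)^2(1+\nu_v).
\]
Hence
\[
\cos^2\theta_{xy}=\frac{\big(1+\rho\sqrt{\nu_x\nu_{\hat r}}\big)^2}{(1+\nu_x)(1+\nu_{\hat r})},
\]
where $\nu_{\hat r}$ is the (S1) quantity. Similarly $\cos^2\theta_{x,\sqrt b}=(1+\rho_b\sqrt{\nu_x\nu_{\sqrt b}})^2/((1+\nu_x)(1+\nu_{\sqrt b}))$, with $\rho_b=\corr(x,\sqrt b)$. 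The factor $1+\nu_x$ cancels in the ratio.

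\textbf{Step 2: remove the unobserved quantities.} What remains is a ratio of two explicit functions of $(\nu_x,\rho_b)$ and of the logged statistics $(\rho,\nu_{\hat r},\nu_{\sqrt b})$. I would lower-bound it by taking the worst case over the unobserved pair $(\nu_x,\rho_b)$. Note that $\rho_b\le 1$, and that $\sqrt b$ is itself one admissible $y$ (namely $\hat r\equiv 1$). This gives a consistency constraint linking $\rho_b$ to $\rho$ through the correlation of $\hat r\sqrt b$ with $\sqrt b$.

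For fixed $\nu_x$, the numerator is increasing in $\rho\sqrt{\nu_{\hat r}}$. Expanding to second order, or using $(1+z)^2\ge 1+2z$ together with $\frac{1}{1+\nu}$ bounds, should produce a bound of the form $1+\rho^2\nu_{\hat r}\cdot g(\nu_{\sqrt b},\rho)$. Optimizing $\nu_x$ against the denominator should then give exactly $g=(1+\nu_{\sqrt b})/(1+\nu_{\sqrt b}/\rho^2)$. This is a Cauchy--Schwarz/harmonic-mean type step: minimizing over $\nu_x$ is where $\rho^2$ and $\nu_{\sqrt b}$ combine in the stated fraction.

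\textbf{Step 3: weight decay.} For $\lambda>0$ the applied direction is $D_\ell=\lambda W_\ell+s_\ell Q_\ell$. In the descent inequality derived via \cref{lem:descent} and \cref{prop:wd}, the linear term gains $\lambda\langle\nabla_\ell f,W_\ell\rangle$. The quadratic term $L_\ell\|D_\ell\|_F^2$ becomes $L_\ell\|Q_\ell\|_F^2\big(1+O(\lambda\|W\|/\|Q\|)+O(\lambda^2\|W\|^2/\|Q\|^2)\big)$. I would first-order-cancel the cross term using the stationarity/alignment bookkeeping of \cref{prop:wd}. Then the remaining relative distortion of $\Phi(\hat r)/\Phi(\mathbf 1)$ is $O(\lambda^2\|W\|_F^2/\|Q\|_F^2)$, which is absorbed as $\delta(\lambda)$. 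The clip $\hat r\in[r_{\min},r_{\max}]$ keeps all constants uniform.

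\textbf{Main obstacle.} I expect the main obstacle to be Step 2. The exact fraction $(1+\nu_{\sqrt b})/(1+\nu_{\sqrt b}/\rho^2)$ must come out of the worst case over $(\nu_x,\rho_b)$, and it is not obvious that the extremum is attained where I expect. My first attempt would be to parametrize everything by angles in the plane spanned by $\mathbf 1$ and the centered part of $x$. In that plane, all three vectors' cosines are explicit and the minimization is one-dimensional. If the exact constant resists, I would settle for proving the inequality with the stated fraction as a lower bound, by checking the boundary cases $\rho\to1$ and $\nu_{\sqrt b}\to0$ and using monotonicity.
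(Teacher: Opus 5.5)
\textbf{Step~2 cannot be completed, because the inequality is false as stated.} Your Step~1 formulas are correct, and you located the difficulty correctly. The problem is that the claim is written in terms of $(\rho,\nu_{\hat r},\nu_{\sqrt b})$ only, while $\kappa_{\mathrm{eff}}$ also depends on how far $x$ is from $\sqrt b$.

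By \cref{thm:opt}, if $a_\ell/b_\ell$ is constant then $x\parallel\sqrt b$. In that case $\kappa_{\mathrm{eff}}(\hat r)\le\kappa_{\mathrm{layer}}=1$ for every $\hat r$, strictly if $\hat r$ is non-constant, yet (S1)--(S2) can still hold. Concretely:
\begin{itemize}
\item Take $h=2$, $a=b=(1,4)$, $\hat r=(1,2)$, so $x=\sqrt b=(1,2)$ and $y=(1,4)$.
\item Then $\rho=1$, and the claimed right-hand side at $\lambda=0$ is $1+\nu_{\hat r}=1.36$ (population variances).
\item But $\kappa_{\mathrm{eff}}=\Phi(\hat r)/\Phi(\mathbf 1)=(81/17)/5=81/85<1$.
\end{itemize}
This point ($\nu_x=\nu_{\sqrt b}$, $\rho_b=1$) is feasible for your worst-case minimization, so no optimization over $\nu_x$ can produce the factor $(1+\nu_{\sqrt b})/(1+\nu_{\sqrt b}/\rho^2)$.

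Your fallback boundary checks fail too. At $\nu_{\sqrt b}=0$ your Step~1 formula gives $\kappa_{\mathrm{eff}}=(1+\rho\sqrt{\nu_x\nu_{\hat r}})^2/(1+\nu_{\hat r})$. As $\nu_x\to0$ with $\rho$ fixed, this tends to $1/(1+\nu_{\hat r})<1$, against the claimed $1+\rho^2\nu_{\hat r}$.

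The paper's own proof has the same hole. It decomposes $y=\alpha\sqrt b+\beta(x-\pi_{\sqrt b}(x))+\rho_\perp$ and treats $\rho$, a correlation centered against $\mathbf 1$, as if it measured the alignment of the components orthogonal to $\sqrt b$. It also never uses any lower bound on the angle between $x$ and $\sqrt b$. In the example, $x-\pi_{\sqrt b}(x)=0$ and its displayed bound on $\cos^2\theta_{xy}$ fails. So no argument from (A1)--(A4), (S1), (S2) alone can give the stated bound; an extra hypothesis is needed.

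\textbf{What is missing is an explicit headroom term, and your planned plane parametrization gives it exactly.} Write $\theta_x,\theta_y$ for the angles of $x,y$ to $\sqrt b$, and $\gamma$ for the cosine between their components orthogonal to $\sqrt b$. Then $\cos\theta_{xy}=\cos\theta_x\cos\theta_y+\gamma\sin\theta_x\sin\theta_y$. From the proof of \cref{thm:opt}, $\tan^2\theta_x=\kappa_{\mathrm{layer}}-1$. Hence
\[
\kappa_{\mathrm{eff}}=\Big(\cos\theta_y+\gamma\sqrt{\kappa_{\mathrm{layer}}-1}\,\sin\theta_y\Big)^2 ,
\]
and $\kappa_{\mathrm{eff}}>1$ if and only if $\gamma\sqrt{\kappa_{\mathrm{layer}}-1}>\tan(\theta_y/2)$.

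So the gain requires the oracle headroom, weighted by the orthogonal alignment $\gamma$, to dominate the spread of $y$ around $\sqrt b$. For small spread the gain is $1+2\gamma\sqrt{\kappa_{\mathrm{layer}}-1}\,\theta_y+O(\theta_y^2)$. That is linear in $\theta_y$ and proportional to the headroom, not of the form $1+\rho^2\nu_{\hat r}(\cdots)$.

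\textbf{Step~3 has a separate gap.} The ``first-order cancellation'' you invoke is not supplied by \cref{prop:wd}. Nothing cancels the cross terms $\lambda\langle\nabla_\ell f,W_\ell\rangle$ and $2\lambda s_\ell\langle W_\ell,Q_\ell\rangle$. The distortion of $\Phi$ is therefore generically $O(\lambda\|W\|_F/\|Q\|_F)$, and a second-order $\delta(\lambda)$ needs an extra hypothesis such as $\langle W_\ell,Q_\ell\rangle\approx0$. The paper's sketch shows the same mismatch: it quotes a first-order relative shift of $\hat r_\ell$ but records $\delta$ as second order.
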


\begin{theorem}[Collapse of unit-inconsistent ratios]
\label{thm:collapse}
If the clipped ratio saturates ($\hat r_{\ell,t}=r_{\max}$ for all $\ell$ at almost every step---the logged behavior of the raw-momentum denominator variants, \cref{sec:req-ablation}), then $\Phi(r_{\max}\mathbf 1)=\Phi(\mathbf 1)$ and $\kappa_{\mathrm{eff}}=1$: a uniformly saturated multiplier is Muon with a relabeled learning rate.
\end{theorem}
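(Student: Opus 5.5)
The statement is essentially a scale-invariance property of the descent functional $\Phi$, so the plan is to prove $\Phi(c\,\mathbf 1)=\Phi(\mathbf 1)$ for every $c>0$ and then read off $\kappa_{\mathrm{eff}}$. Under the saturation hypothesis the per-layer multiplier vector at (almost) every step equals $r_{\max}\mathbf 1$. The null set of non-saturated steps has no effect on expectations or on the averaged descent bound. By definition,
\[
\Phi(r_{\max}\mathbf 1)=\frac{\bigl(\sum_\ell r_{\max}a_\ell\bigr)^2}{\sum_\ell b_\ell r_{\max}^2}=\frac{r_{\max}^2\bigl(\sum_\ell a_\ell\bigr)^2}{r_{\max}^2\sum_\ell b_\ell}=\Phi(\mathbf 1).
\]
Hence $\kappa_{\mathrm{eff}}=\Phi(r_{\max}\mathbf 1)/\Phi(\mathbf 1)=1$. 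More generally, $\Phi$ is homogeneous of degree zero, $\Phi(c\hat r)=\Phi(\hat r)$. The same conclusion also follows from the geometric form in \cref{thm:opt}: $y_\ell=r_{\max}\sqrt{b_\ell}$ is parallel to $\sqrt b$, so $\cos^2\theta_{xy}=\cos^2\theta_{x,\sqrt b}$.

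For the "relabeled learning rate" clause, I would note that the update $W_{\ell,t+1}=W_{\ell,t}-\eta r_{\max}D_{\ell,t}$ coincides step by step with the Muon (or Muon+Moonlight) update at learning rate $\eta'=\eta r_{\max}$. With $\lambda=0$ as in (A4), $D_{\ell,t}=s_\ell Q_{\ell,t}$. For $\lambda>0$, the coupled decay term is likewise just scaled by $r_{\max}$, so the trajectory is Muon with coupled decay at learning rate $\eta'$. Consequently every bound derived from $\Phi$, including the rate of \cref{thm:basic} with $r_{\min}=r_{\max}$, is exactly Muon's after reparametrizing $\eta$.

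The only point needing care is that $\Phi$ is evaluated at each step with the current $a_\ell=\|\nabla_\ell f(W_t)\|_*$. The argument is pointwise in $t$, and the $a_\ell$ cancel out of the ratio, so this step is routine. As a consistency check, saturation makes the multipliers $\hat r_\ell\sqrt{b_\ell}$ proportional to $\sqrt{b_\ell}$. Then $\nu_{\hat r}=\nu_{\sqrt b}$ measures only shape heterogeneity rather than adaptive signal, and no gain is captured beyond Muon's baseline alignment.
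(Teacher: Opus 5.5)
Your proof is correct and is essentially the paper's. The paper observes that $y=r_{\max}\sqrt b$ is parallel to $\sqrt b$ and invokes \cref{lem:geom}, which is the same degree-zero homogeneity of $\Phi$ that you verify by direct cancellation, and you also give the geometric form. One side caution: your closing ``consistency check'' cuts the other way, since at $\lambda=0$ substituting $\nu_{\hat r}=\nu_{\sqrt b}>0$ and $\rho=\corr_\ell(\sqrt{b_\ell},x_\ell)>0$ into \cref{thm:gain} would assert $\kappa_{\mathrm{eff}}>1$ for a uniform multiplier, which exposes a tension with that theorem's $\sqrt b$-weighted statistics rather than confirming them, and the collapse proof does not need the remark.
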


\textbf{A falsifiable prediction.}
\Cref{thm:opt,thm:gain} reduce the gain over Muon to two measurable statistics: heterogeneity and alignment of the logged trust ratios. On CIFAR-10/DavidNet diagnostics, $\nu_{\hat r}\approx 0.3$, $\rho\approx 0.5$, $\nu_{\sqrt b}\approx 0.5$ give $\kappa_{\mathrm{eff}}\in[1.06,1.15]$, consistent with the observed $+0.35$-point gain (\cref{sec:cifar}). The same logic predicts \emph{where} the gain should grow: architectures whose layer classes see maximally different update statistics. Mixture-of-experts transformers are the sharpest available case---routers, sparsely-activated experts, and dense attention blocks differ in activation frequency, gradient scale, and curvature---so (S1)--(S2) should strengthen, and with them the gap over Muon+Moonlight; the prediction was recorded before the 16B-A3B run was launched, and \cref{sec:moe-heterogeneity} reports the test. \Cref{cor:cal} collects the calibrated update's remaining properties (anchor exactness, width-invariance, weight-norm ceiling, early-training LARS signal).

\section{Experiments}
\label{sec:experiments}

\subsection{Setup and the zero-sweep protocol}
\label{sec:protocol}

\textbf{Models and data.} Language models are trained on FineWeb-Edu~\citep{penedo2024fineweb} in two regimes: \emph{dense}, GPT-2-style pre-norm transformers~\citep{radford2019gpt2} (RMSNorm, RoPE, ReLU$^2$ MLP, tied embeddings, vocabulary 50{,}304) at 125M/399M/545M/1.1B following the Moonlight scaling recipe~\citep{liu2025moonlight}; and \emph{MoE}, the Moonlight-16B-A3B architecture (DeepSeek-V3-style~\citep{deepseekai2024deepseekv3}: 27 layers, multi-head latent attention, 64 routed experts with top-6 routing plus 2 shared experts, aux-loss-free load balancing; 15.96B total, 2.24B active parameters) on 10B tokens at sequence length 8192. Vision cross-check: CIFAR-10 on the 24-epoch DavidNet speedrun ($\approx$6.5M parameters), three seeds per cell. Complete per-scale settings are in \cref{app:hyperparams}, compute in \cref{app:compute}.

\textbf{The zero-sweep protocol.} Our evaluation deliberately stresses the calibration anchor. At 399M, 545M, 1.1B, and 16B-A3B, \emph{every} optimizer runs at the Moonlight-recipe learning rate for that scale ($9.503{\times}10^{-4}$, $9.143{\times}10^{-4}$, $8.561{\times}10^{-4}$, $4.2{\times}10^{-4}$), with weight decay $0.1$ and momentum $0.95$: the recipe is tuned for AdamW, shared by Muon+Moonlight via RMS matching, and inherited by OrScale via the calibration anchor. At 125M the baselines were additionally tuned over a four-point LR grid and report their best cell. \emph{OrScale was never swept at any language-model scale in this paper}; its clip bounds stay at the defaults $(0.1,5.0)$ throughout. Any OrScale win therefore comes with zero tuning advantage---if anything the protocol favors the baselines.

\subsection{Dense scaling: 125M--1.1B}
\label{sec:dense}

\begin{table}[t]
\centering
\caption{FineWeb-Edu dense pre-training: final validation cross-entropy (lower is better; \textbf{bold} best per row). All optimizers share the Moonlight-recipe LR shown per row (125M baselines: best of a four-point grid; OrScale everywhere: recipe LR, unswept). Cells with $\pm\sigma$ are means over three seeds; others are single-seed.}
\label{tab:dense}
\small
\setlength{\tabcolsep}{4.5pt}
\begin{tabular}{l l c c c c}
\toprule
Scale & Tokens & Recipe LR & AdamW & Muon+Moonlight & \textbf{OrScale} (ours) \\
\midrule
125M & 5.24B & $1.0\times10^{-3}$ & $3.3721$ & $3.2319$ & $\mathbf{3.2120\pm0.0017}$ \\
399M & 8.92B & $9.503\times10^{-4}$ & $2.9966$ & $\mathbf{2.9198\pm0.0016}$ & $2.9223\pm0.0022$ \\
545M & 14.04B & $9.143\times10^{-4}$ & $2.9235$ & $2.8141\pm0.0013$ & $\mathbf{2.8052\pm0.0013}$ \\
1.1B & 28.54B & $8.561\times10^{-4}$ & $2.7304$ & $2.6360$ & $\mathbf{2.6251}$ \\
\bottomrule
\end{tabular}
\end{table}

\begin{figure}[t]
\centering
\includegraphics[width=\linewidth]{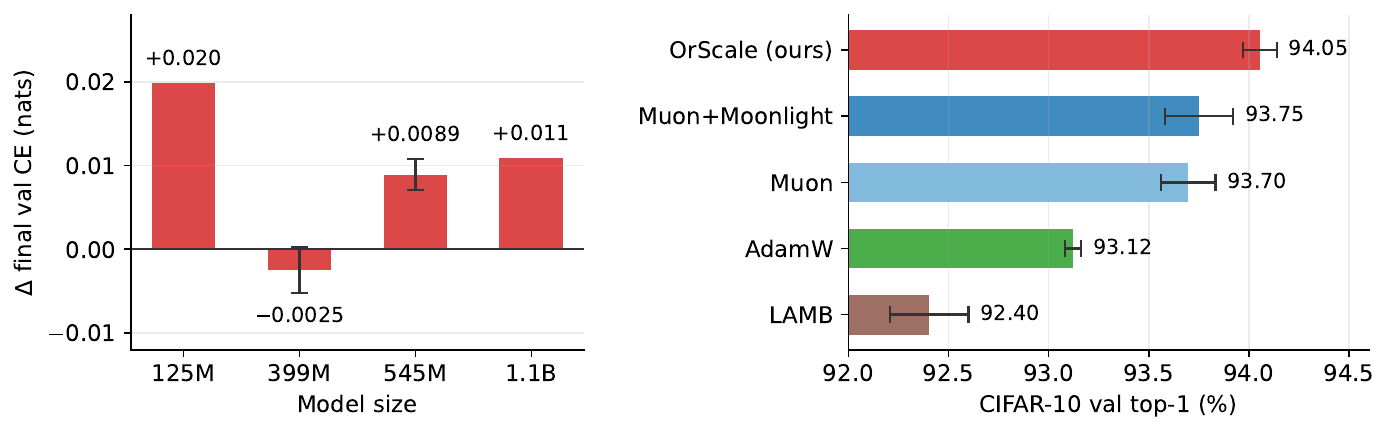}
\caption{\emph{Left:} per-scale gap (Muon+Moonlight $-$ OrScale) on dense FineWeb-Edu; positive means OrScale better. Values match \cref{tab:dense}: three-seed means at 399M/545M and for OrScale at 125M (error bars where both sides have seed replicates: per-side stds combined in quadrature); single-seed elsewhere; the 399M cell is within noise and reported as a tie. \emph{Right:} CIFAR-10/DavidNet val top-1 at each optimizer family's best learning rate (three seeds, $\pm1\sigma$, per-family grids); OrScale runs the vision configuration ($s_\ell{=}1$, calibration off; \cref{sec:cifar}, full sweep in \cref{app:cifar}).}
\label{fig:dense-cifar}
\end{figure}

\Cref{tab:dense} spans a $48\times$ compute range ($0.046\to2.18$ PFLOP-days). Against Muon+Moonlight, OrScale wins at 125M ($+0.020$), 545M ($+0.0089\pm0.0018$ over three seeds---significant), and 1.1B ($+0.011$), and ties at 399M ($-0.0025\pm0.0027$---within noise, reported as a tie); \cref{fig:dense-cifar} (left) plots the gaps at the seed-matched protocol. Against AdamW it wins everywhere by $+0.074$ to $+0.160$ nats. Fitted log--log exponents ($-0.054$/$-0.053$/$-0.052$; \cref{app:lr-sweep}) show the advantage on \emph{dense} models is scale-stable rather than growing---the theory attributes the gain to heterogeneity, and a homogeneous stack of identical blocks offers only a fixed amount of it; the prediction is tested where heterogeneity is large, next. A matched 125M LR sweep (shared grid $\{3{\times}10^{-4},10^{-3},3{\times}10^{-3},10^{-2}\}$) shows OrScale at or below Muon+Moonlight at every stable cell, with Muon+Moonlight diverging at the top cell OrScale survives (\cref{app:lr-sweep}).

\subsection{16B-A3B mixture-of-experts: the heterogeneity stress test}
\label{sec:moe}

\begin{table}[t]
\centering
\caption{Moonlight-16B-A3B MoE, 10B FineWeb-Edu tokens, 8$\times$H20, bf16, ZeRO-1, seed 42. All arms use the Moonlight-recipe settings for this scale (LR $4.2{\times}10^{-4}$, wd $0.1$, $\mu{=}0.95$, warmup 60, cosine), shared by AdamW and Muon+Moonlight by construction and inherited unchanged by OrScale.}
\label{tab:moe}
\small
\begin{tabular}{l c c c c}
\toprule
Optimizer & $r_{\min}/r_{\max}$ & Final val CE & $\Delta$ vs.\ Muon+ML & Wall-clock \\
\midrule
AdamW & --- & $5.2002$ & $-1.7287$ & $3.93$ d \\
Muon + Moonlight & --- & $3.4715$ & --- & $4.98$ d \\
OrScale ($r_{\max}{=}10$) & $0.1/10$ & $3.3556$ & $+0.1159$ & $4.92$ d \\
\textbf{OrScale} & $0.1/5$ & $\mathbf{3.3410}$ & $+0.1305$ & $4.92$ d \\
\bottomrule
\end{tabular}
\end{table}

\Cref{fig:moe-headline} and \cref{tab:moe} report the headline test. At $14.5\times$ the parameter count of our largest dense run, with every hyperparameter inherited verbatim, OrScale improves on Muon+Moonlight by $0.130$ nats ($3.8\%$ relative)---an order of magnitude beyond the dense-regime gaps. OrScale leads at every validation checkpoint from 2.5B tokens onward and passes Muon+Moonlight's \emph{final} loss with a third of the token budget to spare; wall-clock is at parity ($425{,}018$\,s vs.\ $430{,}328$\,s), replacing the usual ${<}1\%$-overhead assertion with a measurement. The $r_{\max}{=}10$ arm doubles the clip ceiling and still beats Muon+Moonlight while losing to $r_{\max}{=}5$: the win is not an artifact of a tight clip, and the coupled-WD ceiling is doing real work. AdamW at the shared recipe trails by $1.73$ nats, consistent with the Moonlight paper's own AdamW-vs-Muon comparison at this architecture; we report it as the recipe-for-recipe reference point (no arm received a per-optimizer retune).

\textbf{The theory's conditions, measured.}\label{sec:moe-heterogeneity}
The gain condition of \cref{thm:gain} is checkable from logs, and this run checks it: the logged trust ratios separate cleanly and persistently by layer class---routed-expert projections at the ceiling within 1.5B tokens, router gates near $2.2$--$2.3$ and never clipping, attention in between---exactly the heterogeneity (S1) requires, aligned with the layer classes whose update statistics differ most, as (S2) requires. The late-training clip fraction ($33\%$ of layers at a bound) is the coupled-WD ceiling engaging for the layer class that wants the largest relative steps, not the uniform saturation pathology of \cref{thm:collapse}. Per-layer trajectories, calibration-constant statistics, and the weaker dense-125M counterparts of these diagnostics are in \cref{app:moe-extra} (\cref{fig:moe-trust}).

\subsection{Calibration is load-bearing}
\label{sec:calibration-ablation}

\begin{table}[t]
\centering
\caption{Calibration ablation (single-seed log values; three-seed means in \cref{tab:dense}). The uncalibrated variant runs \cref{def:principle} with the Moonlight shape factor but no anchor ($c_\ell{=}1$) and must retune its matrix LR upward to train well; calibrated OrScale runs at the untouched recipe LR everywhere.}
\label{tab:calibration}
\small
\begin{tabular}{l l l c}
\toprule
Scale & Variant & Matrix LR & Final val CE \\
\midrule
399M & Muon+Moonlight (reference) & recipe ($9.503{\times}10^{-4}$) & $2.9183$ \\
399M & OrScale, \emph{uncalibrated} & $2.5\times$ recipe ($2.376{\times}10^{-3}$) & $2.9939$ \\
399M & \textbf{OrScale} (calibrated) & recipe, unswept & $\mathbf{2.9247}$ \\
\midrule
125M & OrScale, \emph{uncalibrated} & $3\times$ base ($3{\times}10^{-3}$, tuned) & $3.2232$ \\
125M & \textbf{OrScale} (calibrated) & base ($10^{-3}$), unswept & $\mathbf{3.2120}$ \\
125M & OrScale, calibration in bf16 & base ($10^{-3}$) & $3.2158$ \\
\bottomrule
\end{tabular}
\end{table}

If the calibration constant were cosmetic, removing it would cost a learning-rate sweep and nothing else. \Cref{tab:calibration} shows it costs more: the uncalibrated ratio needs a $2.5$--$3\times$ matrix-LR retune to train well (its uncorrected $r_{\ell,0}\propto 1/\sqrt{\mathrm{fan\mbox{-}in}}$ shrinks early steps several-fold) and \emph{still} finishes $0.069$ nats behind the calibrated variant at 399M---behind even the Muon+Moonlight baseline. Calibration converts the trust ratio from a knob that must be re-tuned per model into a drop-in that inherits the recipe, and it is also worth loss. Computing the anchor in bf16 rather than fp32 costs a small but systematic $0.004$ nats at 125M; hence the fp32 cast in \cref{alg:orscale}.

\subsection{Design-choice ablation}
\label{sec:req-ablation}

\begin{table}[t]
\centering
\caption{Deleting one design choice at a time; each deletion produces a distinct, logged symptom (accuracy alone can mislead: the top row ties OrScale on CIFAR while its ratio carries no update-scale information). CIFAR: val top-1 at the best LR, three seeds. FW-125M: final CE at each variant's best LR under the instrumented 125M ablation sweep that produced the logged symptoms, single seed; that setup differs from the strict-recipe runs (OrScale there: $3.2120$, \cref{tab:dense}), so FW values are comparable within this table only. Symptoms from 125M diagnostics (\cref{app:req-details}).}
\label{tab:req-ablation}
\footnotesize
\setlength{\tabcolsep}{2.5pt}
\begin{tabular}{l l l c c}
\toprule
Variant & Omitted & Logged symptom & CIFAR-10 (\%) & FW-125M (CE) \\
\midrule
$\|Q_\ell\|_F$ denominator & denominator & ratio $=$ shape constant (\cref{lem:polar}) & $94.06\pm0.09$ & $3.231$ \\
$\|\widetilde M_\ell\|_F$ denominator & denominator & $\hat r{=}r_{\max}$ on $99.75$--$100\%$ of steps & $93.84\pm0.15$ & $3.252$ \\
$\RMS(\widetilde M_\ell)$ + Moonlight & denominator & $\hat r{=}r_{\max}$ on $99.5$--$100\%$ of steps & $93.83\pm0.17$ & $3.230$ \\
Uncalibrated ratio & calibration & needs $2.5$--$3\times$ LR retune (\cref{tab:calibration}) & --- & $3.2232$ \\
Decoupled weight decay & WD coupling & $10$--$50\times$ weight-norm growth & $93.55\pm0.12$ & $3.305$ \\
\midrule
\textbf{OrScale} ($s_\ell{=}1$) & --- & in-band, layer-structured & $94.05\pm0.08$ & $3.228$ \\
\textbf{OrScale} (LM config.) & --- & in-band, layer-structured & $94.05\pm0.12$ & $\mathbf{3.226}$ \\
\bottomrule
\end{tabular}
\end{table}

\Cref{tab:req-ablation} deletes one design choice at a time. The deleted variants coincide with hybrid designs that have been suggested for combining trust ratios with Muon, or that we implemented first ourselves; we report them once, here, as ablations of \cref{sec:derivation}'s design choices rather than as motivation. Dropping the applied-direction denominator either removes adaptivity ($\|Q\|_F$ is a shape constant) or mis-scales the ratio by orders of magnitude so it pins at the clip bound on effectively every step---the empirical premise of \cref{thm:collapse}, under which $\kappa_{\mathrm{eff}}=1$ and the ``adaptive'' method is Muon with a relabeled LR. Dropping the calibration severs learning-rate transfer (\cref{tab:calibration}). Decoupling weight decay triggers the runaway feedback loop, with $10$--$50\times$ weight-norm growth on \texttt{mlp} and output projections. Removing the clip entirely is survivable---transient ratios reach $9.6$ before the coupled-WD ceiling engages, ending $0.007$ nats behind---so the safety-critical boundedness lives in the coupled-WD ceiling, not the clip; at 16B, doubling $r_{\max}$ moves the loss by only $0.015$ nats (\cref{tab:moe}).

\subsection{Vision cross-check: CIFAR-10 / DavidNet}
\label{sec:cifar}

On the 24-epoch DavidNet speedrun (three seeds, per-family LR grids), OrScale ranks first at $94.05\pm0.08\%$ val top-1 (\cref{fig:dense-cifar}, right), improving Muon ($93.70\pm0.14$) by $+0.35$ and Muon+Moonlight ($93.75\pm0.17$) by $+0.30$ points ($\approx 5\%$ residual-error reduction); AdamW and LAMB trail at $93.12\pm0.04$ and $92.40\pm0.20$. The result cross-validates the derivation outside the LLM regime and supplies the diagnostics behind the numerical $\kappa_{\mathrm{eff}}$ estimate of \cref{sec:theory}; the full sweep, LR-sensitivity analysis, and per-cell table are in \cref{app:cifar}.

\section{Discussion and conclusion}
\label{sec:discussion}

Muon fixes each update's direction; the remaining per-layer degree of freedom is one scalar per step. We derived that scalar by adapting the LARS/LAMB trust-ratio template to the orthogonalized setting, validating each design choice by a deletion that fails in its own way; the calibration anchor lets a Moonlight-tuned recipe transfer with zero extra sweeps. The theory conditions the gain over Muon on logged statistics rather than oracle constants, predicting growth with architectural heterogeneity---and the sharpest available test agrees: $+0.130$ nats on a 16B-A3B MoE at recipe hyperparameters and parity wall-clock, with the logged ratios exhibiting the layer-class structure the theory requires.

\textbf{Limitations.} The 16B-A3B comparison is single-seed (dense three-seed replication brackets seed noise at $\pm0.002$--$0.003$ nats, ${\sim}50\times$ below the MoE gap); results are FineWeb-Edu validation cross-entropy without downstream evaluations; claims concern autoregressive next-token pre-training with the Moonlight recipe family, and the dense-regime gains are small ($0.009$--$0.020$ nats where significant); what the theory does and does not claim is delimited in \cref{app:proofs}.

\subsubsection*{Ethics statement}
This work concerns optimizer design for neural-network pre-training. It involves no human subjects and no new data collection: all experiments use the publicly released FineWeb-Edu corpus and the CIFAR-10 benchmark under their standard terms. The primary ethical consideration is the dual-use nature of training efficiency---OrScale reduces compute, cost, and energy per unit of pre-training progress, and thereby also lowers the barrier to training models for harmful uses; \cref{app:impact} expands on this, and \cref{app:compute} reports the project's full compute footprint. The method introduces no new dataset, surveillance, or model-release pathway beyond the optimizer implementation.

\subsubsection*{Reproducibility statement}
The full training and analysis code, one configuration file per reported result row, and the raw training logs for the 16B-A3B runs are open-sourced at \url{https://github.com/NUS-HPC-AI-Lab/OrScale}; every table and figure in the paper regenerates from logged data via the included scripts. \Cref{app:hyperparams} specifies architectures, schedules, batch geometry, and every optimizer hyperparameter per scale (none are deferred to external sources); \cref{app:compute} reports hardware and wall-clock; proofs of all statements are in \cref{app:proofs}.

\subsubsection*{AI use statement}
Large language models were used as writing and engineering assistants in preparing this manuscript (restructuring and editing text, LaTeX, and log-analysis/plotting scripts). The method, experiments, and theory originate with the authors, who verified all technical claims, numbers, and proofs.

\bibliography{refs}
\bibliographystyle{iclr2027_conference}

\appendix

\section{Proofs}
\label{app:proofs}

\subsection{Polar-factor algebra and expected descent}

\begin{lemma}[Polar-factor algebra]
\label{lem:polar}
For $A=U\Sigma V^\top$ of rank $k\le\min(m,n)$ and $Q_A=\Ortho(A)=UV^\top$: (i) $\langle A,Q_A\rangle=\|A\|_*$; (ii) $|\langle B,Q_A\rangle|\le\|B\|_*$ for any $B$; (iii) $\|Q_A\|_F=\sqrt k$ and $\|Q_A\|_{\mathrm{op}}=1$.
\end{lemma}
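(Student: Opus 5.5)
All three parts follow from working in the compact SVD $A=U\Sigma V^\top$, with $U\in\R^{m\times k}$ and $V\in\R^{n\times k}$ having orthonormal columns ($U^\top U=V^\top V=I_k$) and $\Sigma=\mathrm{diag}(\sigma_1,\dots,\sigma_k)$, $\sigma_i>0$. I use trace manipulations for (i) and (iii), and the duality between the nuclear and operator norms for (ii).

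For (i), I write $\langle A,Q_A\rangle=\operatorname{tr}(A^\top Q_A)=\operatorname{tr}(V\Sigma U^\top UV^\top)=\operatorname{tr}(V\Sigma V^\top)$. By cyclicity this equals $\operatorname{tr}(\Sigma V^\top V)=\operatorname{tr}\Sigma=\sum_i\sigma_i=\|A\|_*$.

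For (iii), I compute $Q_A^\top Q_A=VU^\top UV^\top=VV^\top$, an orthogonal projector of rank $k$. Hence $\|Q_A\|_F^2=\operatorname{tr}(VV^\top)=\operatorname{tr}(V^\top V)=k$. The nonzero eigenvalues of $VV^\top$ all equal $1$, so every nonzero singular value of $Q_A$ is $1$. This gives $\|Q_A\|_{\mathrm{op}}=1$ when $k\ge1$; if $A=0$, the operator norm is $0$ and the bound $\le1$ is what (ii) actually needs.

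For (ii), I take the SVD $B=\sum_j s_j u_j v_j^\top$ and expand
\[
\langle B,Q_A\rangle=\sum_j s_j\,u_j^\top Q_A v_j .
\]
Each term satisfies $|u_j^\top Q_A v_j|\le\|Q_A\|_{\mathrm{op}}\le1$ because $u_j,v_j$ are unit vectors. Summing gives $|\langle B,Q_A\rangle|\le\sum_j s_j=\|B\|_*$. This is Hölder's inequality for Schatten norms with exponents $(1,\infty)$, proved directly so no external result is needed.

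None of the steps is a real obstacle. The only care needed is bookkeeping: use the compact (rank-$k$) SVD so that $Q_A=UV^\top$ is well defined, and prove (iii) before (ii) so the operator-norm bound is available there.
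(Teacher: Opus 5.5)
Your proof is correct. For (i) and (iii) it is the paper's argument: the same trace computation, and the observation that $Q_A$ has $k$ singular values equal to $1$ and the rest equal to $0$.

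For (ii) you take a different, more elementary route. The paper invokes von Neumann's trace inequality,
$|\mathrm{tr}(B^\top Q_A)|\le\sum_i\sigma_i(B)\,\sigma_i(Q_A)\le\sum_{i\le k}\sigma_i(B)$.
You instead expand $B$ in its own SVD and bound each rank-one term $s_j\,u_j^\top Q_A v_j$ by $s_j\|Q_A\|_{\mathrm{op}}$, which proves the nuclear/operator-norm H\"older duality directly.

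Your version is self-contained and uses only $\|Q_A\|_{\mathrm{op}}\le 1$; that is why it makes sense for you to prove (iii) before (ii). The paper's version relies on a heavier theorem but gives a slightly sharper intermediate bound, the Ky Fan $k$-norm $\sum_{i\le k}\sigma_i(B)$, which the lemma as stated does not need.

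Your remark that $\|Q_A\|_{\mathrm{op}}=1$ requires $k\ge 1$ is also right. For $A=0$ the polar factor is $0$, so the statement's (iii) is slightly imprecise there. As you note, only the inequality $\|Q_A\|_{\mathrm{op}}\le 1$ is used in (ii), and it holds in every case.
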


\begin{proof}
(i) $\langle A,Q_A\rangle=\mathrm{tr}(V\Sigma U^\top UV^\top)=\mathrm{tr}(\Sigma)=\|A\|_*$. (ii) Von Neumann's trace inequality gives $|\mathrm{tr}(B^\top Q_A)|\le\sum_i\sigma_i(B)\,\sigma_i(Q_A)\le\sum_{i\le k}\sigma_i(B)\le\|B\|_*$ since all singular values of $Q_A$ are $1$ (first $k$) or $0$. (iii) immediate from the singular values of $Q_A$.
\end{proof}

\begin{lemma}[Expected descent]
\label{lem:descent}
Let $G_\ell=\nabla_\ell f+\xi_\ell$ with $\E\|\xi_\ell\|_F^2\le\sigma_\ell^2/b$ and $Q_\ell=\Ortho(G_\ell)$. Then
$\E\langle\nabla_\ell f,Q_\ell\rangle\ge\|\nabla_\ell f\|_*-\sqrt{p_\ell}\,\sigma_\ell/\sqrt b$.
\end{lemma}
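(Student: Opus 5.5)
The plan is to split the noise off on the side of the identity rather than on the side of the true gradient. That way Jensen's inequality, not the triangle inequality, absorbs one of the two noise terms, and the constant stays at $1$ instead of $2$.

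\textbf{Step 1 (exact decomposition).} Write $\nabla_\ell f=G_\ell-\xi_\ell$. By \cref{lem:polar}(i),
\[
\langle\nabla_\ell f,Q_\ell\rangle=\langle G_\ell,Q_\ell\rangle-\langle\xi_\ell,Q_\ell\rangle=\|G_\ell\|_*-\langle\xi_\ell,Q_\ell\rangle .
\]
The identity is applied to $G_\ell$, whose polar factor $Q_\ell$ actually is. This step is exact, so no slack is introduced.

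\textbf{Step 2 (Jensen on the first term).} I will use the standard convention behind (A2) that the stochastic gradient is unbiased, $\E\,\xi_\ell=0$, so $\E\,G_\ell=\nabla_\ell f$. The nuclear norm is convex, so Jensen gives
\[
\E\|G_\ell\|_*\ge\|\E\,G_\ell\|_*=\|\nabla_\ell f\|_* .
\]
This is the key point. The naive route bounds $\|G_\ell\|_*\ge\|\nabla_\ell f\|_*-\|\xi_\ell\|_*$ pathwise and pays for the noise twice. Here the first term costs nothing in expectation.

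\textbf{Step 3 (dual bound on the second term).} By \cref{lem:polar}(ii), $|\langle\xi_\ell,Q_\ell\rangle|\le\|\xi_\ell\|_*$ pointwise. This holds even though $Q_\ell$ depends on $\xi_\ell$, because the bound is uniform over all partial isometries. Hence $\E\langle\xi_\ell,Q_\ell\rangle\le\E\|\xi_\ell\|_*$. Next, $\xi_\ell$ has rank at most $p_\ell$, so Cauchy--Schwarz on its singular values gives $\|\xi_\ell\|_*\le\sqrt{p_\ell}\,\|\xi_\ell\|_F$. Combining this with Jensen, $\E\|\xi_\ell\|_F\le(\E\|\xi_\ell\|_F^2)^{1/2}\le\sigma_\ell/\sqrt b$, and therefore $\E\|\xi_\ell\|_*\le\sqrt{p_\ell}\,\sigma_\ell/\sqrt b$. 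This is exactly (A3), so that assumption could be invoked directly instead.

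\textbf{Combining.} Taking expectations in Step 1 and inserting Steps 2 and 3 yields $\E\langle\nabla_\ell f,Q_\ell\rangle\ge\|\nabla_\ell f\|_*-\sqrt{p_\ell}\,\sigma_\ell/\sqrt b$.

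The only real obstacle is obtaining constant $1$, which requires the Jensen step. That step needs unbiasedness of $G_\ell$, which I will state explicitly as part of the noise model. Everything else is routine bookkeeping with \cref{lem:polar}.
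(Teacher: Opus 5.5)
Your proof is correct and matches the paper's argument step for step. You use the exact split $\langle\nabla_\ell f,Q_\ell\rangle=\|G_\ell\|_*-\langle\xi_\ell,Q_\ell\rangle$ from the polar identity, then Jensen on the nuclear norm for the first term, then the dual bound, $\|\xi_\ell\|_*\le\sqrt{p_\ell}\,\|\xi_\ell\|_F$ and Cauchy--Schwarz for the second. You are right to state $\E\,\xi_\ell=0$ explicitly: the paper's Jensen step $\|\E\,G_\ell\|_*=\|\nabla_\ell f\|_*$ relies on it without listing it among the lemma's hypotheses.
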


\begin{proof}
$\langle\nabla_\ell f,Q_\ell\rangle=\langle G_\ell,Q_\ell\rangle-\langle\xi_\ell,Q_\ell\rangle=\|G_\ell\|_*-\langle\xi_\ell,Q_\ell\rangle$ by \cref{lem:polar}(i). By Jensen on the convex norm, $\E\|G_\ell\|_*\ge\|\E G_\ell\|_*=\|\nabla_\ell f\|_*$. By \cref{lem:polar}(ii), $|\langle\xi_\ell,Q_\ell\rangle|\le\|\xi_\ell\|_*\le\sqrt{p_\ell}\|\xi_\ell\|_F$, so $\E|\langle\xi_\ell,Q_\ell\rangle|\le\sqrt{p_\ell}\sqrt{\E\|\xi_\ell\|_F^2}\le\sqrt{p_\ell}\sigma_\ell/\sqrt b$ by Cauchy--Schwarz.
\end{proof}

\subsection{Proof of \texorpdfstring{\cref{thm:basic}}{Theorem 1}}
\label{app:thm-basic}

\textbf{Step 1 (smoothness).} Layerwise smoothness (A1) with $W_{\ell,t+1}-W_{\ell,t}=-\eta\hat r_{\ell,t}Q_{\ell,t}$ gives
\[
f(W_{t+1})\le f(W_t)-\eta\sum_\ell\hat r_{\ell,t}\,\langle\nabla_\ell f(W_t),Q_{\ell,t}\rangle+\frac{\eta^2}{2}\sum_\ell L_\ell\,\hat r_{\ell,t}^2\,\|Q_{\ell,t}\|_F^2.
\]

\textbf{Step 2 (second order).} $\|Q_{\ell,t}\|_F^2\le p_\ell$ (\cref{lem:polar}(iii)) and $\hat r\le r_{\max}$ bound the last term by $\tfrac{\eta^2r_{\max}^2}{2}\sum_\ell L_\ell p_\ell$.

\textbf{Step 3 (first order, no sign assumption).} Write $x_{\ell,t}=\langle\nabla_\ell f(W_t),Q_{\ell,t}\rangle$ and let $x^-=\max(0,-x)$ denote the negative part. The sign of $x_{\ell,t}$ is \emph{not} controlled---when noise dominates, $Q_{\ell,t}$ can be an ascent direction---and $\hat r_{\ell,t}$ may depend on the same sample, so $\hat r_{\ell,t}x_{\ell,t}\ge r_{\min}x_{\ell,t}$ is false in general. Instead we use the pointwise inequality, valid for any $\hat r\in[r_{\min},r_{\max}]$:
\[
\hat r_{\ell,t}\,x_{\ell,t}\;\ge\;r_{\min}\,x_{\ell,t}\;-\;(r_{\max}-r_{\min})\,x_{\ell,t}^-,
\]
verified by inspection in the two sign cases ($x\ge0$: $\hat rx\ge r_{\min}x$; $x<0$: $\hat rx\ge r_{\max}x=r_{\min}x-(r_{\max}-r_{\min})x^-$).

\textbf{Step 4 (the negative part is pure noise).} By \cref{lem:polar}(i)--(ii),
\[
x_{\ell,t}=\|G_{\ell,t}\|_*-\langle\xi_{\ell,t},Q_{\ell,t}\rangle\;\ge\;-\|\xi_{\ell,t}\|_*,
\qquad\text{hence}\qquad
x_{\ell,t}^-\le\|\xi_{\ell,t}\|_*,
\]
and by (A3), $\E[x_{\ell,t}^-]\le\sqrt{p_\ell}\,\sigma_\ell/\sqrt b$. Combining Steps 3--4 with \cref{lem:descent},
\[
\E\big[\hat r_{\ell,t}x_{\ell,t}\big]\;\ge\;r_{\min}\Big(\|\nabla_\ell f\|_*-\frac{\sqrt{p_\ell}\sigma_\ell}{\sqrt b}\Big)-(r_{\max}-r_{\min})\frac{\sqrt{p_\ell}\sigma_\ell}{\sqrt b}
\;=\;r_{\min}\|\nabla_\ell f\|_*-r_{\max}\frac{\sqrt{p_\ell}\sigma_\ell}{\sqrt b}.
\]

\textbf{Step 5 (telescope).} Substituting into Step 1, taking total expectation, summing over $t=1,\dots,T$, and rearranging,
\[
\frac{1}{T}\sum_{t=1}^T\sum_\ell\E\|\nabla_\ell f(W_t)\|_*
\;\le\;
\frac{f_0-f^*}{\eta\,r_{\min}\,T}
+\frac{r_{\max}}{r_{\min}}\cdot\frac{\sum_\ell\sqrt{p_\ell}\,\sigma_\ell}{\sqrt b}
+\frac{\eta\,r_{\max}^2}{2r_{\min}}\sum_\ell L_\ell p_\ell.
\]

\textbf{Step 6 (stepsize).} With $\eta=\sqrt{2(f_0-f^*)/(r_{\max}^2T\sum_\ell L_\ell p_\ell)}$ the first and third terms balance at $\tfrac{r_{\max}}{r_{\min}}\sqrt{2(f_0-f^*)\sum_\ell L_\ell p_\ell/T}$. Dividing by $\sqrt P$ to convert to the $\Psi$ criterion, squaring, and using $(u+v)^2\le 2u^2+2v^2$ yields the claim. \qed

\begin{remark}[Sign-free first-order bound]
\label{rem:fix}
Because $\hat r_{\ell,t}$ may depend on the current sample and the sign of $\langle\nabla_\ell f,Q_{\ell,t}\rangle$ is not controlled, the first-order term is bounded via the pointwise inequality $\hat r x\ge r_{\min}x-(r_{\max}-r_{\min})\,x^-$ with $x^-\le\|\xi_{\ell,t}\|_*$ (Step~3 above). This inflates only the noise term by the factor $r_{\max}/r_{\min}$ visible in \cref{thm:basic}, leaving the $O(1/\sqrt T)$ rate unchanged. The factor is the worst case over arbitrary in-band multipliers---the price of a single statement covering both well-behaved and saturation-prone ratio designs (\cref{sec:req-ablation}). No sign assumption on $\langle\nabla_\ell f,Q_{\ell,t}\rangle$ is needed, and downstream statements (\cref{thm:opt,thm:gain,thm:collapse,cor:cal}) are unaffected, as none consume \cref{thm:basic}'s constant.
\end{remark}

\subsection{Coupled weight decay}

\begin{proposition}[Coupled-WD extension]
\label{prop:wd}
Under (A1)--(A3) and the update $W_{\ell,t+1}=W_{\ell,t}-\eta\hat r_{\ell,t}(\lambda W_{\ell,t}+s_\ell Q_{\ell,t})$ with $\lambda>0$, the descent inequality of \cref{app:thm-basic} acquires a first-order term $-\eta\hat r_{\ell,t}\lambda\langle\nabla_\ell f,W_{\ell,t}\rangle$ and second-order contributions bounded via $\lambda^2\|W_{\ell,t}\|_F^2+2\lambda s_\ell\langle W_{\ell,t},Q_{\ell,t}\rangle$. Along OrScale trajectories $\|W_{\ell,t}\|_F$ is bounded by the ceiling mechanism of \cref{lem:ceiling}, so these are $O(\lambda)$ corrections to the constants and the $O(1/\sqrt T)$ rate is preserved.
\end{proposition}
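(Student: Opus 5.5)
We redo the proof of \cref{thm:basic} with the direction $D_{\ell,t}=\lambda W_{\ell,t}+s_\ell Q_{\ell,t}$ in place of $Q_{\ell,t}$, tracking only the new terms. By (A1),
\[
f(W_{t+1})\le f(W_t)-\eta\sum_\ell\hat r_{\ell,t}\langle\nabla_\ell f,D_{\ell,t}\rangle+\frac{\eta^2}{2}\sum_\ell L_\ell\hat r_{\ell,t}^2\|D_{\ell,t}\|_F^2 .
\]
The first-order inner product splits into $s_\ell\langle\nabla_\ell f,Q_{\ell,t}\rangle$, handled exactly by Steps~3--4 of \cref{app:thm-basic} (the factor $s_\ell$ is absorbed into the constants), and the new term $\lambda\langle\nabla_\ell f,W_{\ell,t}\rangle$. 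Expanding the second-order term gives
\[
\|D_{\ell,t}\|_F^2=\lambda^2\|W_{\ell,t}\|_F^2+2\lambda s_\ell\langle W_{\ell,t},Q_{\ell,t}\rangle+s_\ell^2\|Q_{\ell,t}\|_F^2 .
\]
The last piece is bounded by $s_\ell^2p_\ell$ via \cref{lem:polar}(iii). For the cross term, \cref{lem:polar}(ii) gives $|\langle W,Q\rangle|\le\|W\|_*\le\sqrt{p_\ell}\|W\|_F$. This establishes the form of the corrections claimed.

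Next I would assume a uniform bound $\|W_{\ell,t}\|_F\le R_\ell$ along the trajectory, supplied by the ceiling mechanism of \cref{lem:ceiling}. With it, the second-order terms are at most $\eta^2 r_{\max}^2 L_\ell(\lambda^2R_\ell^2+2\lambda s_\ell\sqrt{p_\ell}R_\ell+s_\ell^2p_\ell)/2$, which is still $O(\eta^2)$ per step. The new first-order term is controlled without any sign information: $|\hat r\lambda\langle\nabla_\ell f,W\rangle|\le r_{\max}\lambda\,\|\nabla_\ell f\|_*\|W\|_{\mathrm{op}}$. I would absorb it as a relative perturbation of the descent term, effectively replacing $r_{\min}s_\ell$ by $r_{\min}s_\ell-r_{\max}\lambda R_\ell$. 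This requires $\lambda$ small enough that the effective coefficient remains positive. Telescoping and choosing $\eta\propto1/\sqrt T$ as in Step~6 then reproduces the $O(1/\sqrt T)$ rate, with constants perturbed by $O(\lambda)$.

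The main obstacle is the trajectory bound on $\|W_{\ell,t}\|_F$. The ceiling of \cref{lem:ceiling} bounds the ratio $r_{\ell,t}$, not the weight norm itself, so a genuine norm bound has to be derived. I would try a one-step recursion: $\|W_{t+1}\|_F\le(1-\eta\hat r\lambda)\|W_t\|_F+\eta\hat r s_\ell\sqrt{p_\ell}$. Since $\hat r\ge r_{\min}>0$, this contracts toward $s_\ell\sqrt{p_\ell}/\lambda$ whenever $\eta r_{\max}\lambda<1$, giving $R_\ell=\max(\|W_{\ell,0}\|_F,\,s_\ell\sqrt{p_\ell}/\lambda)$. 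The catch is that this $R_\ell$ scales like $1/\lambda$, so $\lambda R_\ell=O(1)$ rather than $O(\lambda)$. The honest statement is therefore ``constants change, rate preserved.'' The stronger claim of $O(\lambda)$ corrections needs the additional regime $\|W\|_F$ bounded independently of $\lambda$ over the horizon, which I would state as an explicit assumption.
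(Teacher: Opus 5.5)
Your first half is the paper's sketch: substitute $D_{\ell,t}$ into (A1), expand, and bound the cross term with \cref{lem:polar}(ii). Your objection that \cref{lem:ceiling} bounds $r_{\ell,t}$ rather than $\|W_{\ell,t}\|_F$ is also correct. The contraction $\|W_{\ell,t}\|_F\le\max(\|W_{\ell,0}\|_F,\,s_\ell\sqrt{p_\ell}/\lambda)$ is the right substitute, and it does show that the corrections are not $O(\lambda)$.

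The gap is the absorption step behind your fallback claim ``constants change, rate preserved.'' That step needs $r_{\min}s_\ell-r_{\max}\lambda R_\ell>0$, which you assert for small $\lambda$. But your own radius gives $\lambda R_\ell\ge s_\ell\sqrt{p_\ell}$ for every $\lambda$, so the coefficient is at most $s_\ell(r_{\min}-r_{\max}\sqrt{p_\ell})<0$. The sharper operator-norm recursion $\|W_{\ell,t}\|_{\mathrm{op}}\le\max(\|W_{\ell,0}\|_{\mathrm{op}},\,s_\ell/\lambda)$, which uses $\|Q_{\ell,t}\|_{\mathrm{op}}=1$, still only gives $s_\ell(r_{\min}-r_{\max})\le 0$. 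Since the admissible radius scales like $1/\lambda$, shrinking $\lambda$ never helps. After telescoping, the left-hand side carries a non-positive multiple of $\sum_t\|\nabla_\ell f(W_t)\|_*$, which bounds nothing.

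This is not just a loose estimate: without a trajectory condition the conclusion is false. Take one scalar block with $f(w)=\tfrac12(w-w^\star)^2$, exact gradients ($\sigma=0$), $\mu=0$, $w_0=0$ and $w^\star>s/\lambda$.
\begin{itemize}
\item While $w_t<w^\star$ we have $\Ortho(w_t-w^\star)=-1$, so $w_{t+1}=(1-a_t)w_t+a_t\,s/\lambda$ with $a_t=\eta\hat r_t\lambda\in(0,1)$ once $\eta r_{\max}\lambda<1$.
\item Hence $w_t\in[0,s/\lambda)$ and $\Psi(w_t)>w^\star-s/\lambda>0$ for all $t$ and any in-band $\hat r_t$.
\item Any bound of the form of \cref{thm:basic} with zero noise floor would instead force $\E\,\Psi(W_a)\to0$.
\end{itemize}
With coupled decay, the iteration behaves like a method constrained to a spectral ball of radius $s_\ell/\lambda$: deterministic fixed points satisfy $\lambda W=-s_\ell\Ortho(\nabla_\ell f(W))$. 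Its limit points therefore need not be stationary for $f$. Since $\|W_{\ell,t}\|_F$ stays bounded in this example, the paper's own appeal to boundedness has the same hole.

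What you need, for the rate itself and not only for $O(\lambda)$ constants, is a quantitative interior condition. One such condition is $\lambda\sup_t\|W_{\ell,t}\|_{\mathrm{op}}\le c\,(r_{\min}/r_{\max})\,s_\ell$ with $c<1$. Under it:
\begin{itemize}
\item the first-order coefficient is at least $(1-c)\,r_{\min}s_\ell$;
\item $\|D_{\ell,t}\|_F\le(1+c)\,s_\ell\sqrt{p_\ell}$;
\item Step~6 of \cref{app:thm-basic} returns the $O(1/\sqrt T)$ rate with $c$-dependent constants.
\end{itemize}
These constants become $O(\lambda)$ perturbations when $\sup_t\|W_{\ell,t}\|_{\mathrm{op}}$ is bounded independently of $\lambda$. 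That is essentially the assumption you proposed, but it has to be imposed for the rate, not merely for the size of the correction.
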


\begin{proof}[Proof sketch]
Substitute the coupled update into layerwise smoothness and split the cross terms; the bookkeeping follows the LAMB weight-decay extension~\citep[App.~B]{you2020lamb} with the nuclear-norm criterion in place of the $\ell_1$ criterion, using \cref{lem:polar}(ii) to bound $\langle W,Q\rangle$ by $\|W\|_*$.
\end{proof}

\subsection{Layer-adaptive headroom and the geometric identity}

\begin{proof}[Proof of \cref{thm:opt}]
$\Phi(\hat r)=\langle x,y\rangle^2/\|y\|^2$ with $x_\ell=a_\ell/\sqrt{b_\ell}$, $y_\ell=\hat r_\ell\sqrt{b_\ell}$; by Cauchy--Schwarz $\Phi(\hat r)\le\|x\|^2=\sum_\ell a_\ell^2/b_\ell$ with equality iff $y\parallel x$, i.e.\ $\hat r_\ell\propto a_\ell/b_\ell$. Muon's value is $\Phi(\mathbf 1)=(\sum_\ell a_\ell)^2/\sum_\ell b_\ell$, and $\kappa_{\mathrm{layer}}=\|x\|^2\|\sqrt b\|^2/\langle x,\sqrt b\rangle^2\ge1$ with equality iff $x\parallel\sqrt b$, i.e.\ $a_\ell/b_\ell$ constant.
\end{proof}

\begin{lemma}[Geometric form]
\label{lem:geom}
$\kappa_{\mathrm{eff}}(\hat r)=\cos^2\theta_{xy}/\cos^2\theta_{x,\sqrt b}$, directly from $\Phi(\hat r)=\|x\|^2\cos^2\theta_{xy}$.
\end{lemma}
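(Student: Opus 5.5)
The plan is to establish the identity directly from the definitions of $\Phi$ and $\kappa_{\mathrm{eff}}$, using the change of variables from the proof of \cref{thm:opt}. With $x_\ell=a_\ell/\sqrt{b_\ell}$ and $y_\ell=\hat r_\ell\sqrt{b_\ell}$, we have $x_\ell y_\ell=\hat r_\ell a_\ell$ and $y_\ell^2=b_\ell\hat r_\ell^2$. Hence
\[
\Phi(\hat r)=\frac{\langle x,y\rangle^2}{\|y\|^2}.
\]
This requires $b_\ell=L_\ell p_\ell>0$, which I would state as a standing assumption ($L_\ell>0$). It also requires $y\neq0$, which holds because $\hat r_\ell\ge r_{\min}>0$.

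Next I would write $\langle x,y\rangle=\|x\|\,\|y\|\cos\theta_{xy}$, where $\theta_{xy}$ is the angle between $x$ and $y$ in $\R^h$. This gives $\Phi(\hat r)=\|x\|^2\cos^2\theta_{xy}$, which is the identity quoted in the statement. If $x=0$, i.e.\ all gradients vanish, both $\Phi$ values are zero. In that case I would note that $\kappa_{\mathrm{eff}}$ is undefined (or set by convention) and exclude it by assuming some $a_\ell>0$.

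Then I specialize to Muon, $\hat r\equiv\mathbf 1$. In this case $y=\sqrt b:=(\sqrt{b_\ell})_\ell$, so $\Phi(\mathbf 1)=\|x\|^2\cos^2\theta_{x,\sqrt b}$. Because $x$ and $\sqrt b$ are entrywise nonnegative with $x\neq0$ and $\sqrt b>0$, their inner product is strictly positive. Hence $\cos\theta_{x,\sqrt b}>0$ and the denominator is nonzero.

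Dividing the two expressions, the common factor $\|x\|^2$ cancels and leaves
\[
\kappa_{\mathrm{eff}}(\hat r)=\frac{\cos^2\theta_{xy}}{\cos^2\theta_{x,\sqrt b}}.
\]
The only real obstacle is confirming that this ratio is well defined, i.e.\ the nondegeneracy conditions above. The computation itself is purely algebraic.
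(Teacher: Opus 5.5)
Your proposal is correct and follows the paper's route: substitute $x_\ell=a_\ell/\sqrt{b_\ell}$ and $y_\ell=\hat r_\ell\sqrt{b_\ell}$ to get $\Phi(\hat r)=\langle x,y\rangle^2/\|y\|^2=\|x\|^2\cos^2\theta_{xy}$, take $y=\sqrt b$ for Muon, and cancel $\|x\|^2$. The paper leaves the nondegeneracy conditions implicit, and your explicit checks are a welcome addition: $b_\ell>0$, $\hat r_\ell>0$, and some $a_\ell>0$, which together give $\langle x,\sqrt b\rangle=\sum_\ell a_\ell>0$.
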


\subsection{Proof of \texorpdfstring{\cref{thm:gain}}{the gain theorem}}
\label{app:thm-gain}

By \cref{lem:geom} it suffices to lower-bound $\cos^2\theta_{xy}/\cos^2\theta_{x,\sqrt b}$. Decompose
\[
y=\alpha\sqrt b+\beta\big(x-\pi_{\sqrt b}(x)\big)+\rho_\perp,
\]
where $\pi_{\sqrt b}$ is the orthogonal projection onto $\mathrm{span}(\sqrt b)$ and $\rho_\perp\perp\mathrm{span}(\sqrt b,x)$. The correlation $\rho=\corr_\ell(y_\ell,x_\ell)$ fixes $\beta=\rho\,\|y-\pi_{\sqrt b}(y)\|/\|x-\pi_{\sqrt b}(x)\|\cdot(1+o(1))$ after centering against $\sqrt b$; (S2) gives $\beta>0$. (S1) lower-bounds the off-$\sqrt b$ mass of $y$: $\|y-\pi_{\sqrt b}(y)\|^2/\|\pi_{\sqrt b}(y)\|^2\ge\nu_{\hat r}/(1+\nu_{\sqrt b})$ by the variance decomposition of $y_\ell=\hat r_\ell\sqrt{b_\ell}$ against the direction $\sqrt b$ whose own dispersion is $\nu_{\sqrt b}$. Substituting the decomposition into $\cos^2\theta_{xy}=\langle x,y\rangle^2/(\|x\|^2\|y\|^2)$, expanding $\langle x,y\rangle=\alpha\langle x,\sqrt b\rangle+\beta\|x-\pi_{\sqrt b}(x)\|^2$, and discarding the $\rho_\perp$ mass from the numerator while keeping it in $\|y\|^2$ (the worst case, controlled by $\rho$ through $\|\rho_\perp\|^2\le(1-\rho^2)\|y-\pi_{\sqrt b}(y)\|^2$) yields
\[
\cos^2\theta_{xy}\;\ge\;\cos^2\theta_{x,\sqrt b}\Big(1+\rho^2\nu_{\hat r}\,\frac{1+\nu_{\sqrt b}}{1+\nu_{\sqrt b}/\rho^2}\Big)-\delta(\lambda),
\]
where $\delta(\lambda)=O(\lambda^2\|W\|_F^2/\|Q\|_F^2)$ collects the perturbation of $y$ induced by the $\lambda W$ term inside the denominator (which shifts $\hat r_\ell$ from its $\lambda=0$ expression by a relative $O(\lambda\|W_\ell\|_F/\|s_\ell Q_\ell\|_F)$). Dividing by $\cos^2\theta_{x,\sqrt b}$ gives the claim. \qed

\subsection{Collapse and calibration}

\begin{proof}[Proof of \cref{thm:collapse}]
If $\hat r_{\ell,t}=r_{\max}$ for every $\ell$, then $y=r_{\max}\sqrt b$ is parallel to $\sqrt b$, so $\cos^2\theta_{xy}=\cos^2\theta_{x,\sqrt b}$ and $\kappa_{\mathrm{eff}}=1$ by \cref{lem:geom}: a uniform multiplier cancels in $\Phi$.
\end{proof}

Even a hypothetical re-choice of $r_{\max}$ that brings the raw-momentum ratio in-band does not restore the gain: on FineWeb-Edu 125M diagnostics the cross-layer dispersion of $\|W_\ell\|_F/\|\widetilde M_\ell\|_F$ at any fixed step is a factor ${<}2$--$3$ (its variation is dominated by \emph{time}, not layers), giving $\nu_{\hat r}\lesssim0.05$ and $\kappa_{\mathrm{eff}}\lesssim1.005$ by \cref{thm:gain}'s formula.

\begin{corollary}[Properties of the calibrated ratio]
\label{cor:cal}
With $r_{\ell,t}$ as in \cref{eq:orscale-lm}: (i) \emph{anchor}: $r_{\ell,0}=1$ to within the $\varepsilon$ floor; (ii) \emph{width-invariance}: $r_{\ell,0}=1$ holds for every $(m_\ell,n_\ell)$, and the Moonlight invariant $s_\ell\RMS(Q_{\ell,t})=0.2$ makes the per-element update RMS at the anchor $0.2\,\eta_t$ regardless of width, so the recipe LR transfers in the Moonlight+$\mu$P sense~\citep{liu2025moonlight,yang2022tensorprogramsvtuning}; (iii) \emph{ceiling}: $r_{\ell,t}\to1/(c_\ell\lambda)$ as $\|W_{\ell,t}\|_F\to\infty$ (\cref{lem:ceiling}); (iv) \emph{early-training LARS signal}: when $\lambda\|W_{\ell,t}\|_F\ll s_\ell\sqrt{p_\ell}$, $r_{\ell,t}\approx\|W_{\ell,t}\|_F/\|W_{\ell,0}\|_F$ (\cref{lem:lars}).
\end{corollary}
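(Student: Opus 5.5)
The four parts of \cref{cor:cal} are all direct algebraic consequences of \cref{def:calib} and \cref{eq:orscale-lm}, so I would prove them one at a time. The only tools needed are \cref{lem:polar}(iii) and the triangle inequality.

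\emph{(i) Anchor.} I would evaluate \cref{eq:orscale-lm} at $t=0$ and substitute the stored value $c_\ell=\|W_{\ell,0}\|_F/(\|D_{\ell,0}\|_F+\varepsilon)$, where $D_{\ell,0}=\lambda W_{\ell,0}+s_\ell Q_{\ell,0}$. This gives
\[
r_{\ell,0}=\frac{\|W_{\ell,0}\|_F\,(\|D_{\ell,0}\|_F+\varepsilon)}{\|W_{\ell,0}\|_F\|D_{\ell,0}\|_F+\varepsilon(\|D_{\ell,0}\|_F+\varepsilon)}.
\]
This equals $1-O(\varepsilon/(\|W_{\ell,0}\|_F\|D_{\ell,0}\|_F))$, which is $1$ to within the $\varepsilon$ floor.

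\emph{(ii) Width-invariance.} The identity in (i) never used $(m_\ell,n_\ell)$, so $r_{\ell,0}=1$ for every shape. For the RMS claim, I would use \cref{lem:polar}(iii) for a full-rank $Q$: $\RMS(Q)=\sqrt{p_\ell/(m_\ell n_\ell)}=1/\sqrt{\max(m_\ell,n_\ell)}$. Multiplying by $s_\ell=0.2\sqrt{\max(m_\ell,n_\ell)}$ gives exactly $0.2$. Since $\hat r_{\ell,0}=1$, the first step's per-element RMS is $\eta_0\cdot 0.2$, plus the $O(\lambda)$ decay contribution, which I would either note or absorb.

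\emph{(iii) Ceiling.} I would divide the numerator and denominator of $r_{\ell,t}$ by $\|W_{\ell,t}\|_F$. The triangle inequality gives
\[
\Bigl|\,\|\lambda W+s_\ell Q\|_F/\|W\|_F-\lambda\Bigr|\le s_\ell\sqrt{p_\ell}/\|W\|_F\to0 ,
\]
with $\|Q\|_F\le\sqrt{p_\ell}$ from \cref{lem:polar}(iii). The $\varepsilon/\|W\|_F$ term also vanishes, so $r_{\ell,t}\to1/(c_\ell\lambda)$.

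\emph{(iv) Early-training LARS signal.} This is the step I expect to be the delicate one, because it needs the approximation $c_\ell\approx\|W_{\ell,0}\|_F/(s_\ell\sqrt{p_\ell})$. That in turn requires two things: the regime condition to hold at $t=0$ as well, and $Q_{\ell,t}$ to be full rank, so that $\|s_\ell Q_{\ell,t}\|_F=s_\ell\sqrt{p_\ell}$ at both $t=0$ and $t$. NS output is only approximately orthogonal, so I would state the full-rank condition as an assumption, treating exact polar factors as in \cref{lem:polar}. Under $\lambda\|W\|_F\ll s_\ell\sqrt{p_\ell}$, the triangle inequality gives $\|\lambda W+s_\ell Q\|_F=s_\ell\sqrt{p_\ell}\,(1+O(\lambda\|W\|_F/(s_\ell\sqrt{p_\ell})))$ at both times. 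The common factor $s_\ell\sqrt{p_\ell}$ then cancels in the ratio, leaving $r_{\ell,t}\approx\|W_{\ell,t}\|_F/\|W_{\ell,0}\|_F$ up to relative errors of that same first order. \Cref{lem:lars} and \cref{lem:ceiling} would then be the formal versions of (iv) and (iii).
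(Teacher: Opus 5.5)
Your proposal is correct and follows the paper's route. Parts (i)--(ii) are the ``by construction'' facts from \cref{def:calib} and \cref{lem:polar}(iii), and parts (iii)--(iv) are exactly the leading-order arguments of \cref{lem:ceiling} and \cref{lem:lars}; your full-rank and regime-at-$t=0$ hypotheses spell out what the paper's substitution of the leading-order $c_\ell$ assumes silently. One harmless slip is in (i): expanding your exact expression gives $r_{\ell,0}-1\approx\varepsilon\,(\|W_{\ell,0}\|_F-\|D_{\ell,0}\|_F)/(\|W_{\ell,0}\|_F\|D_{\ell,0}\|_F)$, so the error is of order $\varepsilon/\min(\|W_{\ell,0}\|_F,\|D_{\ell,0}\|_F)$ rather than $\varepsilon/(\|W_{\ell,0}\|_F\|D_{\ell,0}\|_F)$, which still yields the anchor.
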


\begin{lemma}[Ceiling]
\label{lem:ceiling}
As $\|W_{\ell,t}\|_F\to\infty$ with $\|s_\ell Q_{\ell,t}\|_F$ bounded, $r_{\ell,t}\to1/(c_\ell\lambda)$.
\end{lemma}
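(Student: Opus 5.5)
The plan is to read the limit directly off the closed form of the calibrated ratio in \cref{eq:orscale-lm}, dividing numerator and denominator by $\|W_{\ell,t}\|_F$. Write $w=\|W_{\ell,t}\|_F$ and $E=s_\ell Q_{\ell,t}$, with $\|E\|_F\le C$ uniformly by hypothesis. For $s_\ell Q$ this is automatic, since $\|Q_{\ell,t}\|_F\le\sqrt{p_\ell}$ by \cref{lem:polar}(iii), or by a bounded-spectrum assumption on $\NS_k$. Then
\[
r_{\ell,t}=\frac{w}{c_\ell\|\lambda W_{\ell,t}+E\|_F+\varepsilon}
=\frac{1}{c_\ell\bigl\|\lambda W_{\ell,t}/w+E/w\bigr\|_F+\varepsilon/w}.
\]
So it suffices to show that the denominator tends to $c_\ell\lambda$.

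For the main estimate I would use the triangle inequality in both directions. This gives $\bigl|\|\lambda W_{\ell,t}+E\|_F-\lambda w\bigr|\le\|E\|_F\le C$, and after dividing by $w$,
\[
\Bigl|\tfrac{1}{w}\|\lambda W_{\ell,t}+E\|_F-\lambda\Bigr|\le C/w\to0.
\]
The term $\varepsilon/w$ also vanishes. The denominator therefore converges to $c_\ell\lambda$, which is positive provided $\lambda>0$ and $c_\ell>0$. Continuity of $u\mapsto1/u$ at $u=c_\ell\lambda$ then gives $r_{\ell,t}\to1/(c_\ell\lambda)$.

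The step I expect to be the main obstacle is bookkeeping rather than analysis: confirming that $c_\ell$ is a fixed positive constant that does not depend on $t$. By \cref{def:calib}, $c_\ell$ is stored once at the first nonzero update, and it is positive whenever $W_{\ell,0}\neq0$, so the limit is taken with $c_\ell$ frozen. I would also state explicitly that $\lambda>0$ is needed; for $\lambda=0$ the ratio grows like $w/(c_\ell\|E\|_F)$ and there is no ceiling. This is consistent with the paper's claim that the coupled decay, and not the clip, bounds the ratio.

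Finally, I would add a quantitative remark giving the rate:
\[
|r_{\ell,t}-1/(c_\ell\lambda)|=O\bigl((C+\varepsilon/c_\ell)/(c_\ell\lambda^2 w)\bigr).
\]
This rate is what feeds the boundedness argument used in \cref{prop:wd}.
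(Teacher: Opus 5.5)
Your argument is correct and is essentially the paper's proof: the paper expands $\|\lambda W+s_\ell Q\|_F=\lambda\|W\|_F\bigl(1+O(s_\ell\sqrt{p_\ell}/(\lambda\|W\|_F))\bigr)$, which is your reverse-triangle-inequality estimate with $C=s_\ell\sqrt{p_\ell}$, and then concludes with the same continuity step. What you add is the explicit treatment of the $\varepsilon/w$ term, of $c_\ell$ as a fixed positive constant, and of the requirement $\lambda>0$, all of which the paper uses without stating.
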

\begin{proof}
$\|\lambda W+s_\ell Q\|_F=\lambda\|W\|_F(1+O(s_\ell\sqrt{p_\ell}/(\lambda\|W\|_F)))$, so the denominator is $c_\ell\lambda\|W\|_F(1+o(1))$ and the ratio tends to $1/(c_\ell\lambda)$.
\end{proof}

\begin{lemma}[Early-training regime]
\label{lem:lars}
When $\lambda\|W_{\ell,t}\|_F\ll s_\ell\sqrt{p_\ell}$, the denominator is $\approx c_\ell\,s_\ell\sqrt{p_\ell}$; substituting the leading-order $c_\ell=\|W_{\ell,0}\|_F/(s_\ell\sqrt{p_\ell})$ from \cref{def:calib} gives $r_{\ell,t}\approx\|W_{\ell,t}\|_F/\|W_{\ell,0}\|_F$.
\end{lemma}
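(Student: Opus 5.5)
The statement to prove is \cref{lem:lars}: in the regime $\lambda\|W_{\ell,t}\|_F\ll s_\ell\sqrt{p_\ell}$, the calibrated ratio of \cref{eq:orscale-lm} satisfies $r_{\ell,t}\approx\|W_{\ell,t}\|_F/\|W_{\ell,0}\|_F$. I will treat the denominator and the calibration constant $c_\ell$ separately, using the same expansion for both. Throughout I read ``$\approx$'' as ``equal up to a multiplicative factor $1+O(\epsilon)$'', where $\epsilon$ is the small parameter defined below. I also assume, as in \cref{lem:polar}(iii), that $Q_{\ell,t}$ is an exact polar factor of full rank $p_\ell$, so $\|Q_{\ell,t}\|_F=\sqrt{p_\ell}$. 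Newton--Schulz inexactness would only perturb this constant slightly.

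\textbf{Step 1 (denominator).} Set $\epsilon_t=\lambda\|W_{\ell,t}\|_F/(s_\ell\sqrt{p_\ell})$. The triangle inequality gives
\[
\bigl|\,\|\lambda W_{\ell,t}+s_\ell Q_{\ell,t}\|_F-s_\ell\sqrt{p_\ell}\,\bigr|\le\lambda\|W_{\ell,t}\|_F .
\]
Hence $\|D_{\ell,t}\|_F=s_\ell\sqrt{p_\ell}\,(1+\theta_t)$ with $|\theta_t|\le\epsilon_t$. This is the mirror image of the expansion in \cref{lem:ceiling}: there the $W$ term dominates, here the $Q$ term does.

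\textbf{Step 2 (calibration constant).} Apply Step 1 at the anchor step $t=0$ of \cref{def:calib}. This needs the hypothesis there as well, i.e.\ $\epsilon_0\ll1$, which holds since $\|W_{\ell,0}\|_F$ is the initialization scale; I will state this explicitly. It gives
\[
c_\ell=\frac{\|W_{\ell,0}\|_F}{s_\ell\sqrt{p_\ell}}\,(1+\theta_0)^{-1},
\]
ignoring $\varepsilon$ or absorbing it as a further $O(\varepsilon/(s_\ell\sqrt{p_\ell}))$ relative error.

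\textbf{Step 3 (substitution).} Substituting both expressions,
\[
r_{\ell,t}=\frac{\|W_{\ell,t}\|_F}{\|W_{\ell,0}\|_F}\cdot\frac{1+\theta_0}{1+\theta_t}\,(1+O(\varepsilon)).
\]
With $|\theta_0|,|\theta_t|\le\max(\epsilon_0,\epsilon_t)$, the middle factor is $1+O(\epsilon_0+\epsilon_t)$. Note that $s_\ell\sqrt{p_\ell}$ cancels exactly, which is why the statement is independent of $s_\ell$ and of shape.

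The only delicate point is to fix what ``$\approx$'' means and to make sure the small-$\epsilon$ hypothesis is imposed at both $t=0$ and $t$, not only at $t$. I would also flag the rank caveat. If $\widetilde M_{\ell,t}$ is rank-deficient then $\|Q_{\ell,t}\|_F=\sqrt{\rank}<\sqrt{p_\ell}$, and the leading term becomes $\sqrt{\rank\widetilde M_{\ell,t}/\rank\widetilde M_{\ell,0}}$, which is generically $1$ for stochastic gradients.
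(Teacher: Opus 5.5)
Your proposal is correct and is the paper's own argument: drop the $\lambda W_{\ell,t}$ term so the denominator is $c_\ell s_\ell\sqrt{p_\ell}$, substitute the leading-order $c_\ell=\|W_{\ell,0}\|_F/(s_\ell\sqrt{p_\ell})$, and let $s_\ell\sqrt{p_\ell}$ cancel; you make this quantitative with the triangle-inequality bound $|\theta_t|\le\epsilon_t$, which gives the explicit factor $(1+\theta_0)/(1+\theta_t)$. Requiring the smallness condition at the anchor step as well makes explicit what the paper uses only implicitly, and the paper's own initialization figures support it, putting $\lambda\|W_{\ell,0}\|_F/(s_\ell\sqrt{p_\ell})$ at roughly $0.1\times[0.11,0.26]$.
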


\textbf{What the theorems do and do not say.} \emph{Do:} the rate is standard for any in-band multiplier (\cref{thm:basic}); the gain over Muon is strict under measurable conditions (\cref{thm:gain}); saturated unit-inconsistent ratios provably reduce to Muon (\cref{thm:collapse}); calibration delivers the anchor, transfer, ceiling, and LARS-signal properties (\cref{cor:cal}). \emph{Do not:} the \cref{thm:gain} bound is not claimed sharp, and (S1)--(S2) are regime conditions to be checked on diagnostics---as done in \cref{sec:moe-heterogeneity}---not universal facts.

\section{Comparison of guarantee structures}
\label{app:compare}

\begin{table}[h]
\centering
\caption{Guarantee structure on the optimization term, under each method's headline analysis. ``Width-invariant'' refers to whether the bound's leading factor is independent of $d_{\mathrm{model}}$.}
\label{tab:compare}
\footnotesize
\setlength{\tabcolsep}{3pt}
\begin{tabular}{l l l l}
\toprule
Optimizer & Smoothness factor & Layer-adaptive gain & Width-invariant \\
\midrule
SGD & $L_\infty$ & none & yes \\
LARS~\citep{you2017lars} & $L_{\mathrm{avg}}$ (Frobenius) & implicit & yes \\
LAMB~\citep{you2020lamb} & $\|L\|_1/\sqrt h$ & implicit & no ($\propto d_{\mathrm{model}}$) \\
Muon~\citep{jordan2024muon} & $\bar L_p$ (nuclear) & none ($\hat r\equiv1$) & yes \\
Mismatch-denom.\ variants & $\bar L_p$ & $=1$ under saturation (\cref{thm:collapse}) & yes \\
\textbf{OrScale} ($c_\ell{=}1$) & $\bar L_p$ & $\ge1+\rho^2\nu_{\hat r}(\cdots)>1$ (\cref{thm:gain}) & yes \\
\textbf{OrScale} (LM configuration) & $\bar L_p$ & same bound, inherited (\cref{cor:cal}) & yes (\cref{cor:cal}(ii)) \\
\bottomrule
\end{tabular}
\end{table}

Relative to the Muon convergence literature (\cref{sec:related}): Gluon-style analyses~\citep{riabinin2025gluon} obtain layer-wise stepsizes from $(L^0_\ell,L^1_\ell)$ smoothness constants, which are sharper but unobservable; \cref{thm:gain} is deliberately conditioned on statistics of the logged $\hat r_{\ell,t}$, trading tightness for a condition that any practitioner can check on their own run.

\section{Design-choice ablation details}
\label{app:req-details}

This appendix expands the symptoms summarized in \cref{tab:req-ablation}; variant names in the released code are noted for traceability. In the released code, configs, and logs, the LM-configuration arms of the main experiments carry the key \texttt{orscale\_lm}.

\textbf{Post-orthogonalization denominator (omitting applied-direction norm)} (code: \texttt{orscale\_muon}/FM1 lineage). With $Q_\ell$ the polar factor, $\|Q_\ell\|_F=\sqrt{\rank Q_\ell}\approx\sqrt{p_\ell}$ (\cref{lem:polar}), so $\|W_\ell\|_F/\|Q_\ell\|_F$ depends on weight norm and shape only. The variant can still score on CIFAR---weight-RMS variation supplies a residual per-layer signal---but the ratio carries no information about the applied update, and any adaptive interpretation is unfounded.

\textbf{Raw-momentum denominator (omitting applied-direction norm)} (code: \texttt{mutrust}, \texttt{muscale}). The numerator has weight units; $\|\widetilde M_\ell\|_F$ has gradient units and is orders of magnitude smaller. On FineWeb-Edu 125M with $(r_{\min},r_{\max})=(0.5,1.5)$, the raw cross-layer mean of $\|W\|_F/\|\widetilde M\|_F$ spans $24.5$--$19{,}128$ across the LR grid, and the clipped ratio sits at $r_{\max}$ on $99.75$--$100\%$ (Frobenius form) and $99.5$--$100\%$ (RMS/Moonlight form) of diagnostic steps. \Cref{thm:collapse} then applies: the multiplier is uniform, $\kappa_{\mathrm{eff}}=1$, and the method is Muon with a relabeled learning rate---which is exactly how it scores.

\textbf{Uncalibrated ratio (omitting calibration anchor)} (code: \texttt{orscale\_muon\_moonlight}). \Cref{sec:calibration-ablation}; the uncorrected anchor $r_{\ell,0}\propto1/\sqrt{\mathrm{fan\mbox{-}in}}\in[0.11,0.26]$ across our 125M layers.

\textbf{Decoupled weight decay (omitting WD coupling)} (code: FM3 lineage). With the calibrated denominator but a decoupled pull-back $(1-\eta\lambda)W$, FineWeb-Edu 125M diagnostics show $10$--$50\times$ growth of $\|W_\ell\|_F$ on \texttt{mlp.down}, \texttt{mlp.up}, and \texttt{attn.out\_proj} before the ratio pins at $r_{\max}$. The mechanism is the positive feedback loop of \cref{sec:derivation}: the pull-back does not scale with $r_\ell$, so nothing damps growth until the clip does; coupling the decay into the scaled step creates the finite ceiling of \cref{lem:ceiling} instead.

\textbf{Clip removal.} With $(r_{\min},r_{\max})\to(-\infty,\infty)$ at 125M, the largest transient $\hat r$ reaches $9.6$ during the first ${\sim}5\%$ of steps before the coupled-WD ceiling engages; final CE $3.2187$ vs.\ $3.2120$ with the default clip. At 16B, doubling $r_{\max}$ from $5$ to $10$ shifts final CE by $+0.0146$ (\cref{tab:moe}).

\section{Full hyperparameters}
\label{app:hyperparams}

\textbf{Dense LM (FineWeb-Edu).} GPT-2-style decoder-only transformer: pre-norm, RMSNorm, rotary position embeddings, ReLU$^2$ MLP, no biases, tied input/output embeddings, BPE vocabulary 50{,}304 (GPT-2 tokenizer). Precision bf16; gradient accumulation to the global batches below; cosine LR schedule with 1{,}000 warmup steps; matrix parameters use the optimizer under test, non-matrix parameters the AdamW fallback at the same LR with betas $(0.9,0.95)$ and weight decay $0.1$ (matching Moonlight's dense-baseline settings). OrScale (LM configuration): $r_{\min}=0.1$, $r_{\max}=5.0$, $\mu=0.95$, $\lambda=0.1$, $k=5$, $\varepsilon=10^{-6}$, fp32 calibration. Per-scale geometry and shared recipe LRs:

\begin{table}[h]
\centering
\footnotesize
\caption{Dense-LM scales. ``Batch'' is examples per optimizer step (global); tokens/step $=$ batch $\times$ seq. At 545M+ the sequence length is shortened from Moonlight's 8192 to 4096 with batch examples doubled, keeping tokens/step, steps, and the LR schedule unchanged; every optimizer at a given scale sees the identical setting, so within-scale comparisons are unaffected (absolute losses at 545M+ are not comparable to 8K-context numbers).}
\label{tab:dense-hp}
\begin{tabular}{l c c c c c c}
\toprule
Scale & Seq len & Batch (ex.) & Tokens & Steps & Recipe LR & Warmup \\
\midrule
125M & 1024 & 256 & 5.24B & 20{,}000 & $1.0\times10^{-3}$ & 1{,}000 \\
399M & 8192 & 96 & 8.92B & 11{,}342 & $9.503\times10^{-4}$ & 1{,}000 \\
545M & 4096 & 256 & 14.04B & 13{,}390 & $9.143\times10^{-4}$ & 1{,}000 \\
1.1B & 4096 & 384 & 28.54B & 18{,}146 & $8.561\times10^{-4}$ & 1{,}000 \\
\bottomrule
\end{tabular}
\end{table}

\textbf{16B-A3B MoE.} Moonlight-16B-A3B configuration: 27 layers, hidden size 2048, 16 heads, multi-head latent attention (KV LoRA rank 512, $q_{\mathrm{nope}}$ 128, $q_{\mathrm{rope}}$ 64, $v$ head dim 128), 64 routed experts with top-6 sigmoid routing and routed scaling factor 2.446, 2 shared experts, MoE intermediate 1408, dense intermediate 10944 (first layer dense), aux-loss-free load balancing via router-bias updates (rate $10^{-3}$, aux coefficient $0$), vocabulary 163{,}840, sequence length 8192. Total 15{,}958.1M parameters, 2{,}241.7M active per token. Training: 8$\times$H20, bf16, ZeRO-1 optimizer-state sharding, micro-batch 1 with gradient accumulation 256 per rank ($=2048$ examples $=16{,}777{,}216$ tokens/step), 596 steps $=9{,}999{,}220{,}736$ FineWeb-Edu tokens (GPT-2 tokenizer), warmup 60 steps, cosine schedule, seed 42, validation every 50 steps over 10 batches. All optimizers: LR $4.2\times10^{-4}$, weight decay $0.1$; Muon-family: $\mu=0.95$, $k=5$; AdamW (standalone and fallback groups): betas $(0.9,0.95)$; OrScale: $r_{\min}=0.1$, $r_{\max}\in\{5,10\}$.

\textbf{CIFAR-10 / DavidNet.} 24 epochs, global batch 512, bf16, cosine schedule with 500-step warmup, standard augmentation, label smoothing 0, no gradient clipping; three seeds per cell; per-family LR grids with weight decay $5\times10^{-4}$; OrScale clip $(0.5,1.5)$; AdamW fallback LR $10^{-3}$, betas $(0.9,0.999)$. Val top-1 averages the last three epochs, then seeds.

\textbf{Uncalibrated variant LRs (\cref{tab:calibration}).} The uncalibrated variant's matrix LR was tuned to $3\times10^{-3}$ at 125M (grid evidence) and $2.376\times10^{-3}$ at 399M ($2.5\times$ recipe, the rule extrapolated from the 125M ratio); its AdamW group stays at the recipe LR.

\section{16B-A3B additional results}
\label{app:moe-extra}

\begin{figure}[h]
\centering
\includegraphics[width=0.93\linewidth]{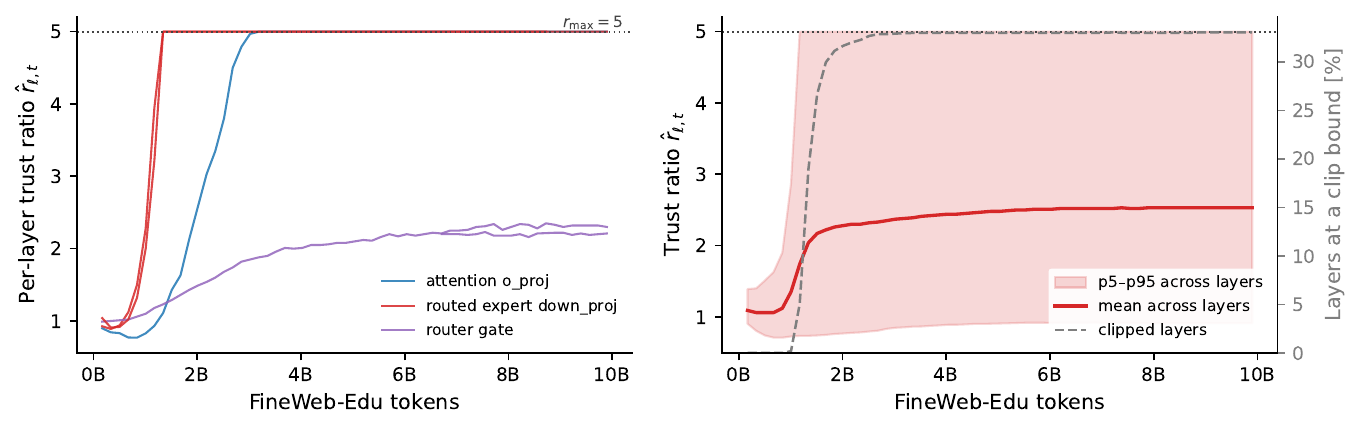}
\caption{Logged OrScale trust ratios at 16B-A3B ($r_{\max}{=}5$ arm), behind the heterogeneity check of \cref{sec:moe}. \emph{Left:} per-layer trajectories $\hat r_{\ell,t}$ by layer class: routed-expert projections rise to the ceiling within 1.5B tokens, attention output projections follow by 3B, router gates settle near $2.2$--$2.3$ and never clip. \emph{Right:} mean and p5--p95 band across all matrix layers; the fraction of layers at a clip bound (gray, right axis) reaches $33\%$ late in training.}
\label{fig:moe-trust}
\end{figure}

\textbf{Trust-ratio heterogeneity in full.} \Cref{fig:moe-trust} shows the logged ratios of the $r_{\max}{=}5$ arm, expanding the heterogeneity check of \cref{sec:moe}. The calibration constants themselves span an order of magnitude across layers ($c_\ell$ p5--p95: $0.0047$--$0.0446$), so no static rescaling could reproduce the layer-class structure. The $33\%$ late-training clip fraction is \emph{not} the saturation pathology of \cref{thm:collapse}---that failure pins \emph{every} layer at $r_{\max}$ from the first steps---but the coupled-WD ceiling engaging for the layer class that wants the largest relative steps while other classes remain in-band; consistently, loosening the ceiling to $r_{\max}{=}10$ moves the final loss by only $+0.015$ nats (\cref{fig:moe-trust-rmax10}). Under the dense 125M diagnostics the same statistics are present but weaker ($\nu_{\mathrm{growth}}\approx 0.4$), matching the smaller dense-regime gains: the theory's heterogeneity dial and the empirical gap move together.

\begin{figure}[h]
\centering
\includegraphics[width=0.85\linewidth]{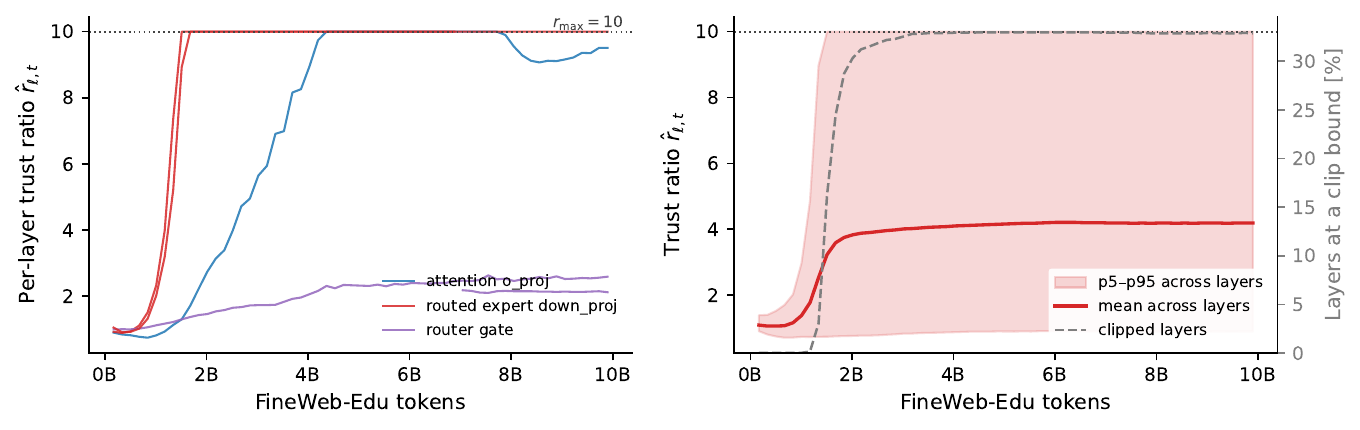}
\caption{Per-layer trust ratios and layer-aggregate band for the $r_{\max}{=}10$ arm. The layer-class ordering (experts $>$ attention $>$ router) is identical to \cref{fig:moe-trust}; expert projections settle near the looser ceiling, and the run finishes $0.0146$ nats behind $r_{\max}{=}5$.}
\label{fig:moe-trust-rmax10}
\end{figure}

\begin{figure}[h]
\centering
\includegraphics[width=0.6\linewidth]{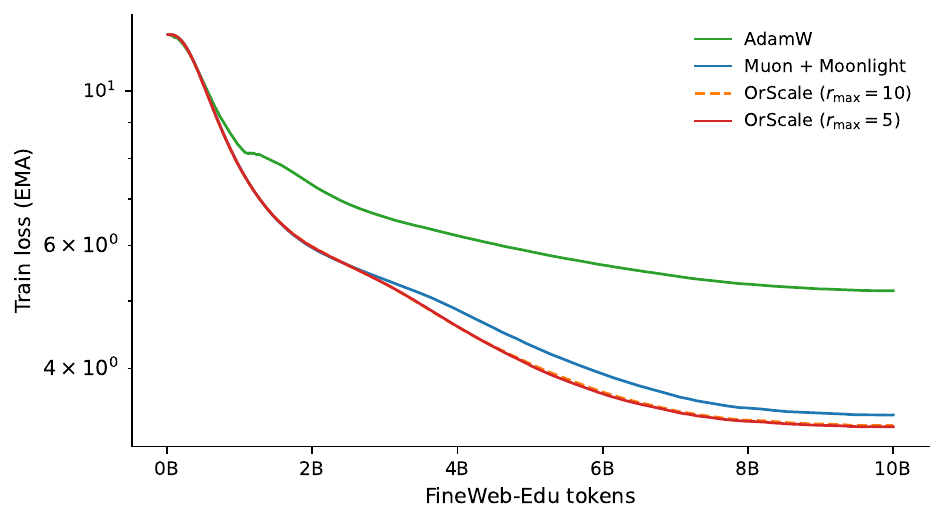}
\caption{Smoothed training loss (EMA) for all four 16B-A3B arms.}
\label{fig:moe-train}
\end{figure}

Throughput: AdamW 28.7k tok/s/rank-group, Muon+Moonlight 23.2k, OrScale 23.5k ($r_{\max}{=}5$) and 23.4k ($r_{\max}{=}10$); the trust-ratio computation is within measurement noise of Muon+Moonlight, and the AdamW-vs-Muon-family difference is the Newton--Schulz cost, unrelated to OrScale. \Cref{fig:moe-train} overlays smoothed training loss for all four arms; validation curves for all arms at all 12 checkpoints are included in the supplementary CSVs.

\section{Dense-regime additional results}
\label{app:lr-sweep}

\textbf{Matched 125M LR sweep.} Both optimizers on the same four-point grid, single seed:

\begin{table}[h]
\centering
\footnotesize
\caption{125M FineWeb-Edu final validation CE across a shared LR grid. OrScale is at or below Muon+Moonlight at every stable cell; both optima sit at the recipe LR $10^{-3}$; Muon+Moonlight diverges at $10^{-2}$ while OrScale still trains.}
\label{tab:lr-sweep}
\begin{tabular}{l c c c c}
\toprule
Optimizer & $3\times10^{-4}$ & $10^{-3}$ & $3\times10^{-3}$ & $10^{-2}$ \\
\midrule
Muon+Moonlight & $3.2841$ & $\mathbf{3.2319}$ & $3.2372$ & $4.6262$ (diverged) \\
OrScale & $3.2588$ & $\mathbf{3.2120}$ & $3.2196$ & $3.3287$ \\
\bottomrule
\end{tabular}
\end{table}

\textbf{Scaling-law fits.} Kaplan-style log--log fits over the four dense scales give $L=2.826\,C^{-0.054}$ (AdamW), $2.731\,C^{-0.053}$ (Muon+Moonlight), $2.725\,C^{-0.052}$ (OrScale); the AdamW and Muon exponents match Moonlight's published fits to within $0.002$, an internal sanity check on the setup. With four points per optimizer we treat the per-scale deltas of \cref{tab:dense} as primary and the fitted exponents as descriptive; \cref{fig:dense-cifar} (left, main text) plots the per-scale gap.

\section{CIFAR-10 details}
\label{app:cifar}

\begin{figure}[h]
\centering
\includegraphics[width=0.62\linewidth]{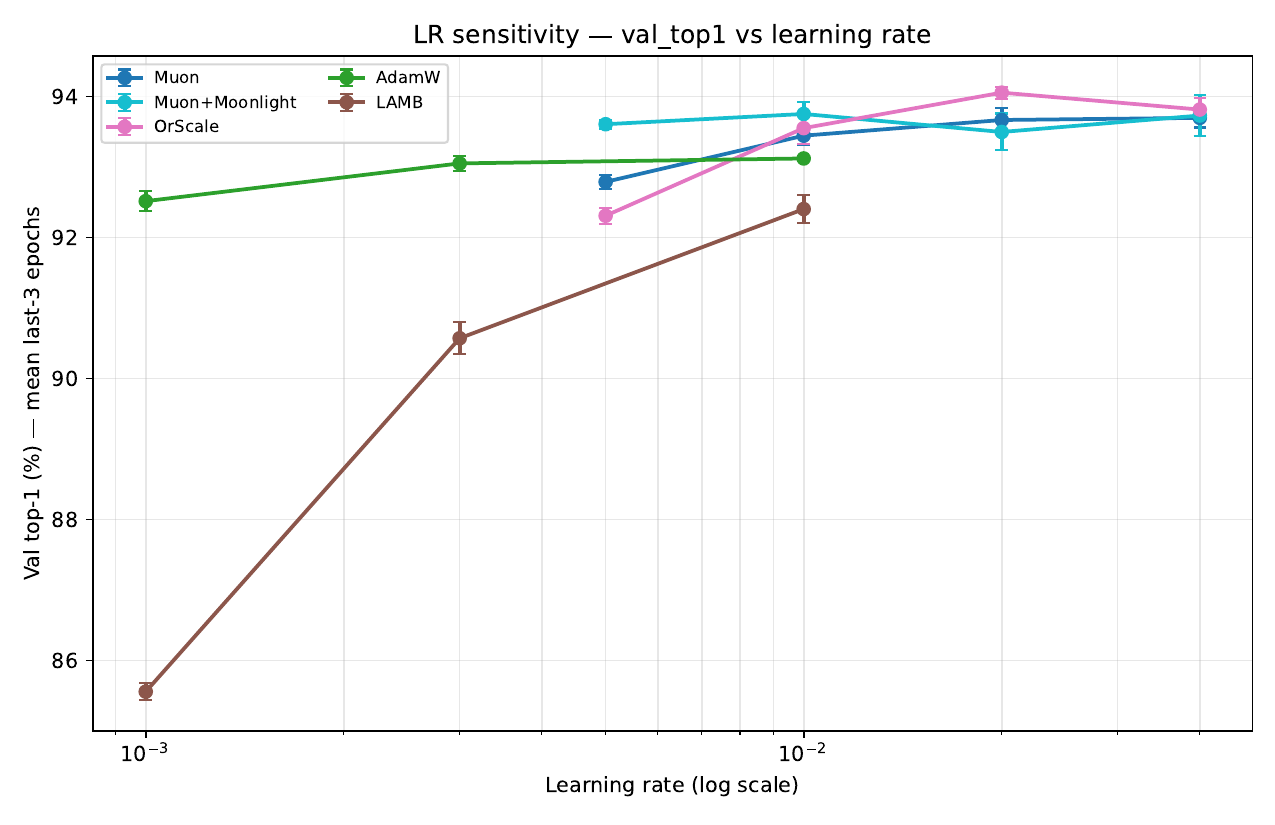}
\caption{CIFAR-10 / DavidNet validation top-1 vs.\ learning rate, five main baselines, three seeds per cell.}
\label{fig:cifar-lr}
\end{figure}

\begin{table}[h]
\centering
\footnotesize
\caption{CIFAR-10 / DavidNet leaderboard at each optimizer's best LR (three seeds, $\pm1\sigma$).}
\label{tab:cifar}
\begin{tabular}{r l l c}
\toprule
Rank & Optimizer & LR & Val top-1 (\%) \\
\midrule
1 & \textbf{OrScale} (ours) & 0.02 & $\mathbf{94.05\pm0.08}$ \\
2 & Muon + Moonlight & 0.01 & $93.75\pm0.17$ \\
3 & Muon & 0.04 & $93.70\pm0.14$ \\
4 & AdamW & 0.01 & $93.12\pm0.04$ \\
5 & LAMB & 0.01 & $92.40\pm0.20$ \\
\bottomrule
\end{tabular}
\end{table}

Two notes on the LR-sensitivity comparison. First, nominal LR grids are not directly comparable across families: Moonlight's shape factor multiplies the effective Muon step by $0.2\sqrt{\max(m,n)}\approx4.8$--$13.6$ for DavidNet's convolutions, so Muon+Moonlight's nominal grid sits at a Muon-effective position several times higher (empirically, Moonlight at LR $0.005$ matches Muon at $0.02$: $93.61$ vs.\ $93.67$). Second, extending the grid downward, Muon+Moonlight at nominal $\{5\times10^{-4},10^{-3}\}$ falls to $92.41\pm0.12$ and $93.18\pm0.09$, off its $93.75$ plateau---its apparent flatness on the nominal grid partly reflects grid placement. At OrScale's best LR the gain over the best Muon-family baseline is $+0.30$ points with a Welch $t\approx2.8$ across seeds.

\section{Compute resources}
\label{app:compute}

CIFAR-10/DavidNet: ${\sim}2$ minutes per run on a single RTX-class GPU; the full sweep ${\sim}8$ GPU-hours. Dense LM: 8$\times$H20-3E, from ${\sim}3$ hours (125M) to ${\sim}3$--$5$ days (1.1B) per run. 16B-A3B MoE: $3.9$--$5.0$ days per arm on 8$\times$H20 ($4$ arms; wall-clock in \cref{tab:moe}). Total project compute including preliminary and failed sweeps: ${\sim}5{,}000$ GPU-hours (the four 16B-A3B arms alone account for ${\sim}3{,}600$).

\section{Broader impact}
\label{app:impact}

OrScale is a pre-training-efficiency modification: its direct effect is reducing compute and energy per unit of pre-training progress. Indirect negative effects mirror those of the systems it accelerates (cheaper pre-training lowers the cost of downstream misuse, bias, and environmental externalities); it introduces no new dataset, surveillance, or model-release pathway beyond the optimizer implementation.

\end{document}